\documentclass[a4paper,11pt]{article}

\usepackage[utf8]{inputenc}
\usepackage[T1]{fontenc}
\usepackage[english]{babel}

\usepackage{hyperref}
\usepackage{url}
\usepackage{booktabs}
\usepackage{amsfonts}
\usepackage{nicefrac}
\usepackage{microtype}

\usepackage[pdftex]{graphicx}

\usepackage{amsmath,amsthm}

\newtheorem{theorem}{Theorem}
\newtheorem{corollary}{Corollary}

\newcommand{\E}{\textnormal{E}}
\newcommand{\Prp}[1]{\Pr\!\left[{#1} \right]}
\newcommand{\Ep}[1]{\E\!\left[{#1} \right]}
\newcommand{\floor}[1]{\left\lfloor{#1}\right\rfloor}
\newcommand{\set}[1]{\left \{ #1 \right \}}
\newcommand{\eps}{\varepsilon}
\newcommand{\R}{\mathbb{R}}
\newcommand{\norm}[1]{\left \| #1 \right \|}
\newcommand{\abs}[1]{\left | #1 \right |}

\def\sgn{\operatorname{sgn}}
\usepackage[usenames,dvipsnames]{xcolor}
\newcommand{\mt}[1]{{#1}} 

\usepackage{siunitx}

\usepackage{todonotes}
\usepackage{xargs}
\newcommandx{\sdtodo}[2][1=]{\todo[linecolor=red,backgroundcolor=red!25,bordercolor=red,#1]{SD:#2}}

\usepackage{fullpage}

\usepackage{authblk}
\title{Practical Hash Functions for Similarity Estimation and Dimensionality
Reduction\thanks{Code for this paper is available at
\url{https://github.com/zera/Nips_MT}}}

\author[1,2]{Søren Dahlgaard}
\author[1,2]{Mathias Bæk Tejs Knudsen}
\author[1]{Mikkel Thorup}
\affil[1]{University of Copenhagen -- \texttt{mthorup@di.ku.dk}}
\affil[2]{SupWiz -- \texttt{[s.dahlgaard,m.knudsen]@supwiz.com}}
\date{}

\begin{document} 

\maketitle

\begin{abstract}
    Hashing is a basic tool for dimensionality reduction employed in
    several aspects of machine learning. However, the perfomance
      analysis is often carried out under the \emph{abstract}
      assumption that a truly random unit cost hash function is used,
      without concern for which \emph{concrete} hash function is
      employed. The concrete hash function may work fine on
    sufficiently random input. The question is if they can be
    \emph{trusted} in the real world where they may be faced with more structured 
    input.

    In this paper we focus on two prominent applications
    of hashing, namely similarity estimation with the one permutation
    hashing (OPH) scheme of Li et al.~[NIPS'12] and feature hashing (FH) of
    Weinberger et al. [ICML'09], both of which have found numerous
    applications, i.e. in approximate near-neighbour search with LSH and
    large-scale classification with SVM.
    
    We consider the recent mixed tabulation hash function of Dahlgaard
    et al. [FOCS'15] which was proved theoretically to perform like
    a truly random hash function in many applications, including the
    above OPH. Here we first show improved concentration bounds for FH with
    truly random hashing and then argue that mixed tabulation performs similar
    when the input vectors are not too dense.
    Our main contribution, however, is an experimental comparison of
    different hashing schemes when used inside \mt{FH, OPH, and LSH}.

    We find that mixed tabulation hashing is almost as fast as the
    classic multiply-mod-prime scheme $(ax+b) \bmod
    p$. Mutiply-mod-prime is guaranteed to work well on sufficiently
    random data, but here we demonstrate that in the above
    applications, it can lead to bias and poor concentration on both
    real-world and synthetic data. We also compare with the very popular
    MurmurHash3, which has no proven guarantees. Mixed
    tabulation and MurmurHash3 both perform similar to truly random hashing in
    our experiments. However, mixed tabulation was 40\% faster than
    MurmurHash3, and it has the proven guarantee of good performance
    (like fully random) on all possible input making it more reliable.

\end{abstract}

\section{Introduction}\label{sec:intro}
Hashing is a standard technique for dimensionality reduction and is employed as
an underlying tool in several aspects of machine learning including
search~\cite{li12oneperm,Shrivastava14oneperm,Shrivastava14densify,Andoni14lsh},
classification~\cite{li11minhash,li12oneperm}, duplicate
detection~\cite{manku07duplicates}, computer vision and information retrieval
\cite{ShakhnarovichDI08}. The need for dimensionality reduction techniques
such as hashing is becoming further important due to the huge growth in
data sizes. As an example, already in 2010, Tong~\cite{Tong10} discussed data
sets with $10^{11}$ data points and $10^9$ features. Furthermore, when working
with text, data points are often stored as $w$-shingles (i.e. $w$ contiguous
words or bytes) with $w\ge 5$. This further increases the dimension from, say,
$10^5$ \emph{common} english words to $10^{5w}$.

Two particularly prominent applications are set similarity estimation as
initialized by the MinHash algorithm of Broder, et
al.~\cite{broder97onthe,broder97minwise} and feature hashing (FH) of Weinberger, et
al.~\cite{WeinbergerDLSA09}. Both applications have in common that they are
used as an underlying ingredient in many other applications.
While both MinHash and FH can be seen as hash functions mapping an entire set
or vector, they are perhaps better described as algorithms implemented using
what we will call \emph{basic hash functions}. A basic hash function $h$ maps a
given key to a hash value, and any such basic hash function, $h$, can be used
to implement Minhash, which maps a set of keys, $A$, to the smallest hash value
$\min_{a\in A} h(a)$. A similar case can be made for other locality-sensitive
hash functions such as SimHash~\cite{Charikar02}, One Permutation
Hashing (OPH)~\cite{li12oneperm,Shrivastava14oneperm,Shrivastava14densify}, and
cross-polytope hashing~\cite{AndoniILRS15,TerasawaT07,KennedyW16}, which are
all implemented using basic hash functions.

\subsection{Importance of understanding basic hash functions}
In this paper we analyze the basic hash functions needed for the applications
of similarity estimation and FH. This is important for two
reasons: 1) As mentioned in \cite{li12oneperm}, dimensionality reduction is
often a time bottle-neck and using a fast basic hash function to implement it
may improve running times significantly, and 2) the theoretical guarantees of
hashing schemes such as Minhash and FH rely crucially on the basic
hash functions used to implement it, and this is further propagated into
applications of these schemes such as approximate similarity search with the
seminal LSH framework of Indyk and Motwani~\cite{IndykM98}.

To fully appreciate this, consider LSH for
approximate similarity search implemented with MinHash. We know
from~\cite{IndykM98} that this structure obtains \emph{provably} sub-linear
query time and \emph{provably} sub-quadratic space, where the exponent depends
on the probability of hash collisions for ``similar'' and ``not-similar'' sets.
However, we also know that implementing MinHash with a poorly chosen hash
function leads to \emph{constant bias} in the estimation~\cite{PatrascuT16},
and this constant then appears in the \emph{exponent} of both the space and the
query time of the search structure leading to worse theoretical guarantees.

Choosing the right basic hash function is an often overlooked aspect, and many
authors simply state that any (universal) hash function ``is usually sufficient
in practice'' (see e.g.~\cite[page 3]{li12oneperm}). While this is indeed the
case most of the time (and provably if
the input has enough entropy \cite{mitzenmacher08hash}), many applications
rely on taking advantage of highly structured data to perform well
(such as classification or similarity search). In these cases a poorly chosen
hash function may lead to very systematic inconsistensies. Perhaps the most
famous example of this is hashing with linear probing which was deemed
very fast but unrealiable in practice until it was fully understood which hash
functions to employ (see \cite{thorup12kwise} for discussion and experiments).
Other papers (see e.g.~\cite{Shrivastava14oneperm,Shrivastava14densify}
suggest using very powerful machinery such as the seminal pseudorandom
generator of Nisan~\cite{nisan92spaceprg}. However, such a PRG does not
represent a hash function and implementing it as such would incur a huge
computational overhead.

Meanwhile, some papers do indeed consider which concrete hash functions to
use. In \cite{dahlgaard13nnbottomk} it was considered to use $2$-independent
hashing for bottom-$k$ sketches, which was proved in \cite{thorup13bottomk} to
work for this application. However, bottom-$k$ sketches do not work for
SVMs and LSH. Closer to our work, \cite{Li12bbitpractice} considered the use of
$2$-independent (and $4$-independent) hashing for large-scale classification
and online learning with $b$-bit minwise hashing. Their experiments indicate
that $2$-independent hashing often works, and they state that ``the simple and
highly efficient $2$-independent scheme may be sufficient in practice''.
However, no amount of experiments can show that this is the case for all input.
In fact, we demonstrate in this paper -- for the underlying FH and OPH --
that this is not the case, and that we cannot trust $2$-independent hashing to
work in general. \mt{As noted, \cite{Li12bbitpractice} used hashing for
similarity estimation in classification, but without considering the quality of
the underlying similarity estimation. Due to space restrictions, we do not
consider classification in this paper, but instead focus on the quality of the
underlying similarity estimation and dimensionality reduction sketches as well
as considering these sketches in LSH as the sole applicaton (see also the
discussion below).}

\subsection{Our contribution}
We analyze the very fast and powerful mixed tabulation scheme
of~\cite{DahlgaardKRT15} comparing it to some of the most popular and widely
employed hash functions. In \cite{DahlgaardKRT15} it was shown that
implementing OPH with mixed tabulation gives
concentration bounds ``essentially as good as truly random''. For feature
hashing, we first present new concentration bounds for the truly random case
improving on \cite{WeinbergerDLSA09,DasguptaKS10}. We then argue that mixed
tabulation gives essentially as good concentration bounds in the case where the
input vectors are not too dense, which is a very common case for applying
feature hashing.

Experimentally, we demonstrate that mixed tabulation is almost as fast as the
classic multiply-mod-prime hashing scheme. This classic scheme is guaranteed to
work well for the considered applications when the data is sufficiently random,
but we demonstrate that bias and poor concentration can occur on both
synthetic and real-world data. \mt{We verify on the same experiments that mixed
tabulation has the desired strong concentration, confirming the theory.}
We also find that
mixed tabulation is roughly 40\% faster than the very popular MurmurHash3 and
CityHash. In our experiments these hash functions perform similar to mixed
tabulation in terms of concentration. They do, however, not have the same
theoretical guarantees making them harder to trust.
We also consider different basic hash functions for implementing LSH with
OPH. We demonstrate that the bias and poor concentration of the simpler
hash functions for OPH translates into poor concentration for e.g. the recall
and number of retrieved data points
of the corresponding LSH search structure. Again, we observe that this is not
the case for mixed tabulation, which systematically out-performs the faster
hash functions. We note that \cite{Li12bbitpractice} suggests that
$2$-independent hashing only has problems with dense data sets, but both the
real-world and synthetic data considered in this work are sparse or, in the
case of synthetic data, can be generalized to arbitrarily sparse data. While we
do not consider $b$-bit hashing as in \cite{Li12bbitpractice}, we note that
applying the $b$-bit trick to our experiments would only introduce a bias from
false positives for all basic hash functions and leave the conclusion the
same.

It is important to note that our results do not imply that standard hashing
techniques (i.e. multiply-mod prime) never work. Rather, they show that there
does exist practical scenarios where the theoretical guarantees matter, making
mixed tabulation more consistent. We believe that the very fast evaluation time
and consistency of mixed tabulation makes it the best choice for the
applications considered in this paper.

\section{Preliminaries}
As mentioned we focus on similarity estimation and feature hashing. Here we
briefly describe the methods used. We let $[m] = \{0,\ldots, m-1\}$, for some
integer $m$, denote the output range of the hash functions considered.

\subsection{Similarity estimation}
In similarity estimation we are given two sets, $A$ and $B$ belonging to some
universe $U$ and are tasked with estimating the Jaccard similarity $J(A,B) =
|A\cap B|/|A\cup B|$. As mentioned earlier, this can be solved using $k$
independent repetitions of the MinHash algorithm, however this requires
$O(k\cdot |A|)$ running time. In this paper we instead use the faster OPH of
Li et al.~\cite{li12oneperm} with the densification scheme of Shrivastava and
Li~\cite{Shrivastava14densify}. This scheme works as follows: Let $k$ be a
parameter with $k$ being a divisor of $m$,
and pick a random hash function $h :
U\to [m]$.
for each
element $x$ split $h(x)$ into two parts $b(x), v(x)$,
where $b(x) : U\to [k]$ is given by $h(x)\bmod k$ and $v(x)$ is given by
$\floor{h(x)/k}$. To create the sketch $S_{OPH}(A)$ of
size $k$ we simply let $S_{OPH}(A)[i] = \min_{a\in A, b(a) = i} v(a)$.
To estimate the similarity of two sets $A$ and $B$ we simply take the fraction
of indices, $i$, where $S_{OPH}(A)[i] = S_{OPH}(B)[i]$.

This is, however, not an unbiased estimator, as there may be \emph{empty bins}.
Thus, \cite{Shrivastava14oneperm,Shrivastava14densify} worked on handling empty
bins. They showed that the following addition gives an unbiased estimator with good
variance. For each index $i\in [k]$ let $b_i$ be a random bit. Now, for a given
sketch $S_{OPH}(A)$, if the $i$th bin is empty we copy the value of the closest
non-empty bin going left (circularly) if $b_i=0$ and going right if $b_i=1$. We
also add $j\cdot C$ to this copied value, where $j$ is the distance
to the copied bin and $C$ is some sufficiently large offset parameter. The
entire construction is illustrated in Figure~\ref{fig:oph}
\begin{figure}[htbp]
    \centering
    \includegraphics[width=0.45\textwidth]{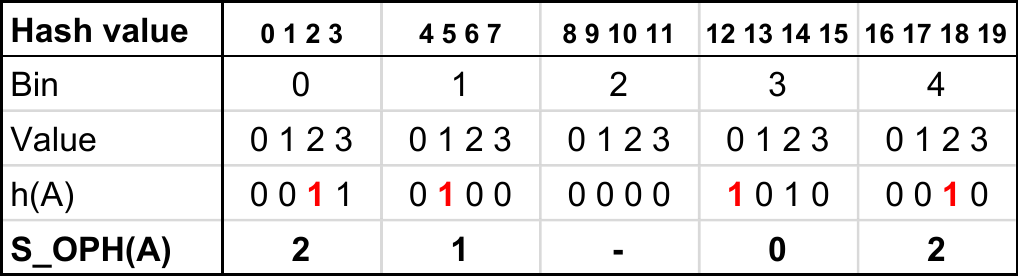}
    \hspace{1em}
    \includegraphics[width=0.45\textwidth]{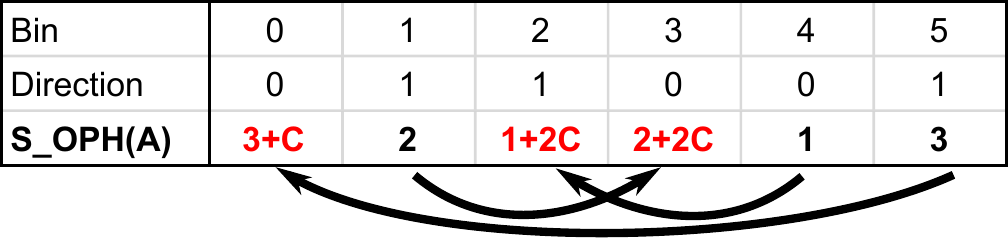}
    \caption{Left: Example of one permutation sketch creation of a set $A$ with
    $|U|=20$ and $k=5$. For each of the $20$ possible hash value the
    corresponding bin and value is displayed. The hash values of $A$, $h(A)$,
    are displayed as an indicator vector with the minimal value per bin marked
    in red. Note that the 3rd bin is empty. Right: Example of the densification
    from \cite{Shrivastava14densify} (right).}
    \label{fig:oph}
\end{figure}

\subsection{Feature hashing}\label{sec:fh}
Feature hashing (FH) introduced by Weinberger et al.~\cite{WeinbergerDLSA09} takes a
vector $v$ of dimension $d$ and produces a vector $v'$ of dimension $d' \ll d$
preserving (roughly) the norm of $v$. More precisely, let $h : [d]\to[d']$ and
$\sgn : [d] \to \{-1,+1\}$ be random hash functions, then $v'$ is defined as
$v'_i = \sum_{j, h(j) = i} \sgn(j)v_j$.
Weinberger et al.~\cite{WeinbergerDLSA09} (see also \cite{DasguptaKS10}) showed
exponential tail bounds on $\|v'\|_2^2$ when $\|v\|_\infty$ is sufficiently
small and $d'$ is sufficiently large.

\subsection{Locality-sensitive hashing}
The LSH framework of \cite{IndykM98} is a solution to the approximate near
neighbour search problem: Given a giant collection of sets $\mathcal{C} =
A_1,\ldots, A_n$, store a data structure such that, given a query set $A_q$, we
can, loosely speaking, efficiently find a $A_i$ with large $J(A_i,A_q)$.
Clearly, given the potential massive size of $\mathcal{C}$ it is infeasible to
perform a linear scan.

With LSH parameterized by positive integers $K,L$ we create a size
$K$ sketch $S_{oph}(A_i)$ (or using another method) for each $A_i\in
\mathcal{C}$. We then store the set $A_i$ in a large table indexed by this
sketch $T[S_{oph}(A_i)]$. For a given query $A_q$ we then go over all sets
stored in $T[S_{oph}(A_q)]$ returning only those that are ``sufficiently
similar''. By picking $K$ large enough we ensure that very distinct sets
(almost) never end up in the same bucket, and by repeating the data structure
$L$ independent times (creating $L$ such tables) we ensure that similar sets
are likely to be retrieved in at least one of the tables.

Recently, much work has gone into providing theoretically
optimal \cite{AndoniR15,AndoniLRW17,Christiani17} LSH. However, as noted in
\cite{AndoniILRS15}, these solutions require very sophisticated
locality-sensitive hash functions and are mainly impractical.
We therefore choose to focus on more practical variants relying either on OPH
\cite{Shrivastava14oneperm,Shrivastava14densify} or
FH~\cite{Charikar02,AndoniILRS15}.

\subsection{Mixed tabulation}\label{sec:mtab}
Mixed tabulation was introduced by~\cite{DahlgaardKRT15}.
For simplicity assume that we are hashing from the universe $[2^w]$
and fix integers $c,d$ such that $c$ is a divisor of $w$. Tabulation-based
hashing views each key $x$ as a list of $c$ characters $x_0,\ldots, x_{c-1}$,
where $x_i$ consists of the $i$th $w/c$ bits of $x$. We say that the alphabet
$\Sigma = [2^{w/c}]$. Mixed tabulation uses $x$ to derive $d$ additional
characters from $\Sigma$. To do this we choose $c$ tables $T_{1,i} : \Sigma\to
\Sigma^d$ uniformly at random and let $y = \oplus_{i=0}^c T_{1,i}[x_i]$
(here $\oplus$ denotes the XOR operation). The $d$ derived characters
are then $y_0,\ldots , y_{d-1}$. To create the final hash value we additionally
choose $c+d$ random tables $T_{2,i} : \Sigma\to [m]$ and define
\[
    h(x) = \bigoplus_{i\in [c]} T_{2,i}[x_i] \bigoplus_{i\in [d]}
    T_{2,i+c}[y_i]\ .
\]
Mixed Tabulation is extremely fast in practice due to the word-parallelism of
the XOR operation and the small table sizes which fit in fast cache.
It was proved in \cite{DahlgaardKRT15} that implementing OPH with mixed
tabulation gives Chernoff-style concentration bounds when estimating Jaccard
similarity.

\mt{Another advantage of mixed tabulation is when generating many
hash values for the same key. In this case, we can increase the output size of
the tables $T_{2,i}$, and then whp. over the choice of $T_{1,i}$ the resulting
output bits will be independent. As an example, assume that we want to map each
key to two 32-bit hash values. We then use a mixed tabulation hash function as
described above mapping keys to one 64-bit hash value, and then split this hash
value into two 32-bit values, which would be independent of each other with
high probability. Doing this with e.g. multiply-mod-prime hashing
would not work, as the output bits are not independent. Thereby we
significantly speed up the hashing time when generating many hash values for
the same keys.}

A sample implementation with $c=d=4$ and 32-bit keys and values can be found
below.
\begin{verbatim}
uint64_t mt_T1[256][4];  // Filled with random bits
uint32_t mt_T2[256][4];  // Filled with random bits

uint32_t mixedtab(uint32_t x) {
  uint64_t h=0;  // This will be the final hash value
  for(int i = 0;i < 4;++i, x >>= 8)
    h ^= mt_T1[(uint8_t)x][i];
  uint32_t drv=h >> 32;
  for(int i = 0;i < 4;++i, drv >>= 8)
    h ^= mt_T2[(uint8_t)drv][i];
  return (uint32_t)h;
}
\end{verbatim}

The main drawback to mixed tabulation hashing is that it needs a relatively
large random seed to fill out the tables $T_1$ and $T_2$. However, as noted in
\cite{DahlgaardKRT15} for all the applications we consider here it suffices to
fill in the tables using a $\Theta(\log |U|)$-independent hash
function.

\section{Feature Hashing with Mixed Tabulation}\label{sec:fhash_mixed}
As noted, Weinberger et al.~\cite{WeinbergerDLSA09} showed exponential tail
bounds for feature hashing. Here, we first prove improved concentration bounds,
and then, using techniques from \cite{DahlgaardKRT15} we argue that these
bounds still hold (up to a small additive factor polynomial in the universe
size) when implementing FH with mixed tabulation.

The concentration bounds we show are as follows (proved in
Appendix~\ref{app:proofs}).
\begin{theorem}\label{thm:fhash}
    Let $v\in \mathbb{R}^d$ with $\|v\|_2 = 1$ and let $v'$ be the
    $d'$-dimensional vector obtained by applying feature hashing implemented
    with truly random hash functions. Let $\eps,\delta\in (0,1)$.
    Assume that $d'\ge 16\eps^{-2}\lg(1/\delta)$ and $\|v\|_\infty \le
    \frac{\sqrt{\eps\log(1+\frac{4}{\eps})}}{6\sqrt{\log(1/\delta)\log(d'/\delta)}}$.
    Then it holds that
    \begin{align}
    	\label{eq:fhashguarantee}
        \Prp{1-\eps < \|v'\|_2^2 < 1+\eps} \ge 1 - 4\delta\,.
    \end{align}
\end{theorem}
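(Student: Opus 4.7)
The natural strategy is to split the two sources of randomness: first reveal the hash $h$, then average over the signs. Writing $\sigma_j = \sgn(j)$, the deviation admits the clean quadratic form
\[
    D := \norm{v'}_2^2 - 1 = \sigma^T A(h) \sigma, \quad A(h)_{jk} = v_j v_k \mathbf{1}[h(j) = h(k),\, j \neq k],
\]
so that $\Ep{D \mid h} = 0$ since $A(h)$ has zero diagonal. Because $A(h)$ is block-diagonal across the $d'$ bins, its relevant norms reduce to per-bin quantities: $\norm{A(h)}_{\mathrm{op}} \le \max_i M_i(h)$ and $\norm{A(h)}_F^2 \le \sum_j v_j^2 M_{h(j)}(h)$, where $M_i(h) := \sum_{j:\, h(j) = i} v_j^2$ is the mass in bin $i$.

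Given $h$, $D$ is a centred Rademacher chaos of order two. Here I would use not plain Hanson--Wright but a Bennett-style tail obtained from a careful moment calculation: the small-deviation scale is $\norm{A}_F^2$ and the large-deviation scale is $\norm{A}_{\mathrm{op}}$, with Bennett's rate function contributing an extra $\log(1 + \eps/(\norm{A}_{\mathrm{op}}\log(1/\delta)))$-type factor in the sub-exponential regime. This refinement is what ultimately produces the $\sqrt{\log(1+4/\eps)}$ appearing in the $\norm{v}_\infty$ hypothesis; plain Bernstein/Hanson--Wright would cost exactly that factor.

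It remains to control both norms of $A(h)$ on a high-probability event over $h$. For the operator norm, each $M_i(h)$ is a sum of independent $[0,\norm{v}_\infty^2]$-bounded contributions with mean $1/d'$; a Bernstein inequality and union bound over the $d'$ bins give $\max_i M_i(h) = O(1/d' + \norm{v}_\infty^2 \log(d'/\delta))$ with probability at least $1-\delta$. For the Frobenius norm, either Markov applied to $\Ep{\norm{A(h)}_F^2} \le 1/d'$ or a second Bernstein argument on $\sum_j v_j^2(M_{h(j)}(h) - v_j^2)$ yields $\norm{A(h)}_F^2 = O(1/d')$ with probability at least $1-\delta$. On the intersection of these two good events, plugging in $d' \ge 16\eps^{-2}\log(1/\delta)$ together with the stated bound on $\norm{v}_\infty$ makes the conditional Bennett tail at most $2\delta$, and a union bound with the two failure events of the good set gives the final $1-4\delta$.

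The main obstacle I anticipate is the Bennett refinement in the sign step. Plain Hanson--Wright is easier but would cost a bare $\log(d'/\delta)$ in the infinity-norm hypothesis without any $\sqrt{\log(1+4/\eps)}$ relief, which is incompatible with the stated bound. Carrying out the Bennett-style high-moment computation for the off-diagonal chaos $\sigma^T A \sigma$ and carefully tracking the interpolation between the two regimes is where I expect the calculation to be the most delicate; the $h$-step Bernstein bounds, by contrast, should be routine.
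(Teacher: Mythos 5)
Your decomposition $\norm{v'}_2^2-1=\sigma^TA(h)\sigma$ with $A(h)$ block-diagonal over bins is valid, and the route (condition on $h$, bound $\norm{A}_{\mathrm{op}}$ and $\norm{A}_F$ whp, then a conditional chaos tail) is genuinely different from the paper's. But the ingredient you defer to the end --- a Bennett-refined tail for the conditional Rademacher chaos, gaining a factor $\log(1+4/\eps)$ in the exponent of the sub-exponential regime over Hanson--Wright --- is not just ``delicate''; given only $\norm{A}_F$ and $\norm{A}_{\mathrm{op}}$ it is not available. Conditioned on $h$ the deviation is $\sum_i Y_i$ with $Y_i=(\sum_{j\in B_i}\sigma_jv_j)^2-M_i$ independent across bins; each $Y_i$ is genuinely sub-exponential with scale $M_i$ (not bounded above), so the large-deviation tail of the sum is $\exp(-ct/\max_iM_i)$ with no logarithmic improvement --- Bennett's gain requires bounded summands, and for (sub-)Gaussian-squared summands the plain sub-exponential rate is essentially tight (Latala's moment bounds for chaos, $\norm{\sigma^TA\sigma}_p\asymp\sqrt p\norm{A}_F+p\norm{A}_{\mathrm{op}}$, say the same thing). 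As you yourself compute, at the stated threshold $\max_iM_i\approx\norm{v}_\infty^2\log(d'/\delta)=\eps\log(1+4/\eps)/(36\log(1/\delta))$, so $\exp(-c\eps/\max_iM_i)=\exp(-36c\log(1/\delta)/\log(1+4/\eps))\gg\delta$ for small $\eps$. So the conditional step fails exactly where you need the refinement, and the refinement does not exist in the two-norm form you invoke. A secondary issue: Markov on $\Ep{\norm{A(h)}_F^2}\le 1/d'$ only gives $\norm{A}_F^2\le 1/(d'\delta)$, which costs a $1/\delta$ you cannot afford, and the alternative Bernstein bound on $\sum_jv_j^2M_{h(j)}$ has dependent summands, so that step also needs real work.

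The paper gets the $\log(1+4/\eps)$ from a different place, and this is the idea your approach is missing. It processes coordinates $k=1,\ldots,d$ one at a time, writing $X_k=\norm{v'^{(k)}}_2^2-\norm{v'^{(k-1)}}_2^2-v_k^2=2\sgn(k)v_kv'^{(k-1)}_{h(k)}$ as a martingale difference with $\abs{X_k}\le 2\norm{v}_\infty\max_k\norm{v'^{(k)}}_\infty$ and predictable variance $\le 8/d'$; a Freedman/Bennett-type inequality for martingales (Corollary~\ref{cor:generalChernoff}) then handles the quadratic part with only a constant-factor use of the Bennett logarithm. The $\log(1+4/\eps)$ factor enters when bounding $\max_k\norm{v'^{(k)}}_\infty$: each prefix coordinate $v'^{(k)}_i$ is a sum of \emph{bounded} independent increments in $\set{-v_k,0,v_k}$ with total variance $\le 1/d'$, and here Bennett genuinely applies and delivers the logarithmic gain, which is then fed back as the increment bound $M$ for the quadratic martingale. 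In short: the refinement lives on the linear prefix sums, not on the quadratic form, and your conditioning on $h$ discards the sequential structure that makes this possible. To repair your argument you would have to either prove a chaos inequality exploiting the specific rank-one-per-block, bounded-entry structure of $A(h)$ beyond its two norms, or switch to the martingale formulation.
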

Theorem \ref{thm:fhash} is very similar to the bounds on feature hashing
by Weinberger et al.~\cite{WeinbergerDLSA09} and Dasgupta et al.~\cite{DasguptaKS10}, but
improves on the requirement on the size of $\norm{v}_\infty$.
Weinberger et al.~\cite{WeinbergerDLSA09} show that \eqref{eq:fhashguarantee}
holds if $\norm{v}_\infty$ is bounded by $\frac{\eps}{18\sqrt{\log(1/\delta)\log(d'/\delta)}}$,
and Dasgupta et al.~\cite{DasguptaKS10} show that \eqref{eq:fhashguarantee} holds if $\norm{v}_{\infty}$
is bounded by $\sqrt{\frac{\eps}{16\log(1/\delta)\log^2(d'/\delta)}}$.
We improve on these results factors of $\Theta\!\left(\sqrt{\frac{1}{\eps}\log(1/\eps)}\right)$
and $\Theta\!\left(\sqrt{\log(1/\eps)\log(d'/\delta)}\right)$ respectively.
We note that if we use feature hashing with a pre-conditioner (as in e.g. \cite[Theorem 1]{DasguptaKS10})
these improvements translate into an improved running time.

Using \cite[Theorem 1]{DahlgaardKRT15} we get the following corollary.
\begin{corollary}\label{cor:fhash_mixed}
    Let $v,\eps,\delta$ and $d'$ be as in Theorem \ref{thm:fhash}, and let $v'$ be the
    $d'$-dimensional vector obtained using feature hashing on $v$ implemented
    with mixed tabulation hashing. Then, if $\mathrm{supp}(v)\le
    |\Sigma|/(1+\Omega(1))$ it holds that
    \[
        \Prp{1-\eps < \|v'\|_2^2 < 1+\eps} \ge 1 - 4\delta -
        O \left ( |\Sigma|^{1-\floor{d/2}} \right ) \,.
    \]
\end{corollary}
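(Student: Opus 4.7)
The plan is to treat the corollary as a direct transfer result: Theorem \ref{thm:fhash} already controls the probability of the bad event $\{\|v'\|_2^2 \notin (1-\eps, 1+\eps)\}$ under truly random hashing, so the only thing left is to show that swapping in mixed tabulation distorts this probability by at most the claimed additive error. This is exactly the setting of \cite[Theorem 1]{DahlgaardKRT15}, which says that for any event determined by the hash values on a key set $S$ with $|S| \le |\Sigma|/(1+\Omega(1))$, the probability under mixed tabulation and under truly random hashing differ by at most $O(|\Sigma|^{1-\floor{d/2}})$. So the proof is essentially a matter of casting feature hashing into that framework.

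First I would fix the key set to be $\mathrm{supp}(v) \subseteq [d]$, which by hypothesis satisfies the density requirement $|\mathrm{supp}(v)| \le |\Sigma|/(1+\Omega(1))$. The event in question, namely that $\|v'\|_2^2 \in (1-\eps,1+\eps)$, depends only on the hash values $h(j)$ and $\sgn(j)$ for $j\in \mathrm{supp}(v)$, since coordinates with $v_j=0$ contribute nothing to $v'$. Next I would handle the fact that feature hashing needs two hash functions (the bin hash $h : [d]\to[d']$ and the sign hash $\sgn : [d]\to\{-1,+1\}$) by invoking the observation from Section \ref{sec:mtab} that a single mixed tabulation call with a wider output range can be split into several independent-looking hash values; so both $h$ and $\sgn$ can be realized from one mixed tabulation, and the event $\{\|v'\|_2^2\in(1-\eps,1+\eps)\}$ is a function of the output of this single mixed tabulation evaluated on $\mathrm{supp}(v)$.

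With this setup, \cite[Theorem 1]{DahlgaardKRT15} applies to yield
\[
\bigl|\Prp{1-\eps < \|v'\|_2^2 < 1+\eps}_{\mathrm{mix}} - \Prp{1-\eps < \|v'\|_2^2 < 1+\eps}_{\mathrm{rand}}\bigr| \le O\!\left(|\Sigma|^{1-\floor{d/2}}\right).
\]
Combining this with the bound $1-4\delta$ from Theorem \ref{thm:fhash} on the right-hand probability gives the stated conclusion.

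The main obstacle I would anticipate is the bookkeeping around expressing $(h,\sgn)$ as coordinates of a single mixed tabulation output in a way that matches the hypotheses of \cite[Theorem 1]{DahlgaardKRT15} verbatim; once that is in place the argument is essentially a one-line application. A secondary concern is confirming that the definition of ``event'' in \cite{DahlgaardKRT15} covers indicator-of-a-real-valued-condition events like $\{\|v'\|_2^2\in(1-\eps,1+\eps)\}$, but since the condition depends only on the finitely many hash outputs on $\mathrm{supp}(v)$, it is such an event and no extra work is needed.
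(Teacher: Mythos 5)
Your proposal is correct and matches the paper's approach: the paper gives no separate proof of this corollary beyond the remark that it follows from \cite[Theorem 1]{DahlgaardKRT15}, which is precisely the transfer argument you describe (the event depends only on the hash values on $\mathrm{supp}(v)$, whose size satisfies the density hypothesis, so the probability under mixed tabulation differs from the truly random case by at most $O(|\Sigma|^{1-\floor{d/2}})$). Your handling of $h$ and $\sgn$ via a single wider mixed tabulation output is also exactly the point the paper makes in the remark immediately following the corollary.
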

In fact Corollary~\ref{cor:fhash_mixed} holds even if
both $h$ and $\sgn$ from Section~\ref{sec:fh} are implemented using the same
hash function. I.e., if \mt{$h^\star : [d]\to \{-1,+1\}\times[d']$ is a mixed
tabulation hash function as described in Section~\ref{sec:mtab}.}

We note that feature hashing is often applied on very high dimensional, but
sparse, data (e.g. in \cite{AndoniILRS15}), and thus the requirement
$\mathrm{supp}(v)\le |\Sigma|/(1+\Omega(1))$ is not very prohibitive.
Furthermore, the target dimension $d'$ is usually logarithmic in the universe,
and then Corollary~\ref{cor:fhash_mixed} still works for vectors with
polynomial support giving an exponential decrease.

\section{Experimental evaluation}
\label{sec:experiments}

We experimentally evaluate several different basic hash functions. We first
perform an evaluation of running time. We then evaluate the fastest hash
functions on synthetic data confirming the theoretical results of
Section~\ref{sec:fhash_mixed} and \cite{DahlgaardKRT15}. Finally, we demonstrate that
even on real-world data, the provable guarantees of mixed tabulation sometimes
yields systematically better results.

Due to space restrictions, we only present some of our experiments here. The
rest are included in Appendix~\ref{app:figures}.

We consider some of the most popular and fast hash functions employed in practice
in $k$-wise PolyHash \cite{CarterW79}, Multiply-shift
\cite{dietzfel97closest}, MurmurHash3 \cite{Appleby16}, CityHash
\cite{PikeA11}, and the cryptographic hash function Blake2
\cite{AumassonNWW13}.
Of these hash functions only mixed tabulation (and very high degree PolyHash)
\emph{provably} works well for the applications we consider. However,
Blake2 is a cryptographic function which provides similar guarantees
conditioned on certain cryptographic assumptions being true.
The remaining hash functions have provable weaknesses, but often
work well (and are widely employed) in practice. See e.g.~\cite{murmurbreak}
who showed how to break both MurmurHash3 and Cityhash64.

All experiments are implemented in C++11 using a random seed from
\url{http://www.random.org}. The seed
for mixed tabulation was filled out using a random $20$-wise PolyHash
function. All keys and hash outputs were 32-bit integers to ensure efficient
implementation of multiply-shift and PolyHash using Mersenne prime $p =
2^{61}-1$ and GCC's 128-bit integers.

We perform two time experiments, the results of which are presented in
Table~\ref{tab:times}. Namely, we evaluate each hash function on the same
$10^7$ randomly chosen integers and use each hash function to implement FH on
the News20 dataset (discussed later).
We see that the only two functions faster than mixed tabulation
are the very simple multiply-shift and 2-wise PolyHash.
MurmurHash3 and CityHash were roughly  $30$-$70$\% slower than mixed
tabulation. This even though we used the official
implementations of MurmurHash3, CityHash and Blake2 which are highly optimized
to the x86 and x64 architectures, whereas mixed tabulation is just
standard, portable C++11 code. The cryptographic hash function, Blake2, is
orders of magnitude slower as we would expect.

\begin{table}[htbp]
    \caption{Time taken to evaluate different hash functions to 1) hash $10^7$
    random numbers, and 2) perform feature hashing with $d'=128$ on
    the entire News20 data set.}
    \label{tab:times}
    \centering
    \begin{tabular}{l S[table-format=4.2] S[table-format=4.2]}
        \toprule
        Hash function & {time ($1..10^7$)} & {time (News20)} \\
        \midrule
        Multiply-shift & 7.72{\,ms} & 55.78{\,ms} \\
        2-wise PolyHash & 17.55{\,ms} & 82.47{\,ms} \\
        3-wise PolyHash & 42.42{\,ms} & 120.19{\,ms}\\
        MurmurHash3 & 59.70{\,ms} & 159.44{\,ms} \\
        CityHash & 59.06{\,ms} & 162.04{\,ms} \\
        Blake2 & 3476.31{\,ms} & 6408.40{\,ms} \\
        \midrule
        Mixed tabulation & 42.98{\,ms} & 90.55{\,ms} \\
        \bottomrule
    \end{tabular}
\end{table}

Based on Table~\ref{tab:times} we choose to compare mixed tabulation to
multiply-shift, 2-wise PolyHash and MurmurHash3. We also include results for
20-wise PolyHash as a (cheating) way to ``simulate'' truly random hashing.

\subsection{Synthetic data}\label{sec:synth}
For a parameter, $n$, we generate two sets $A,B$ as follows. The intersection
$A\cap B$ is created by sampling each integer from $[2n]$
independently at random with probability $1/2$. The symmetric difference is
generated by sampling $n$ numbers greater than $2n$ (distributed evenly to $A$
and $B$). Intuitively, with a hash function like $(ax+b)\bmod p$, the dense
subset of $[2n]$ will be mapped very systematically and is likely (i.e.
depending on the choice of $a$) to be spread out evenly. When using OPH, this means that
elements from the intersection is more likely to be the smallest element in
each bucket, leading to an over-estimation of $J(A,B)$.

We use OPH with densification as in \cite{Shrivastava14densify} implemented
with different basic hash functions to estimate $J(A,B)$. We generate one
instance of $A$ and $B$ and perform $2000$ independent repetitions for each
different hash function on these $A$ and $B$. Figure~\ref{fig:synth_oph} shows
the histogram and mean squared error (MSE) of estimates obtained with $n=2000$
and $k=200$. The figure confirms
the theory: Both multiply-shift and 2-wise PolyHash exhibit bias and bad
concentration whereas both mixed tabulation and MurmurHash3 behaves essentially
as truly random hashing. We also performed experiments with $k=100$ and
$k=500$ and considered the case of $n = k/2$, where we expect many empty
bins and the densification of \cite{Shrivastava14densify} kicks in. All
experiments obtained similar results as
Figure~\ref{fig:synth_oph}.
\begin{figure}[htbp]
    \centering
    \includegraphics[width = \textwidth]{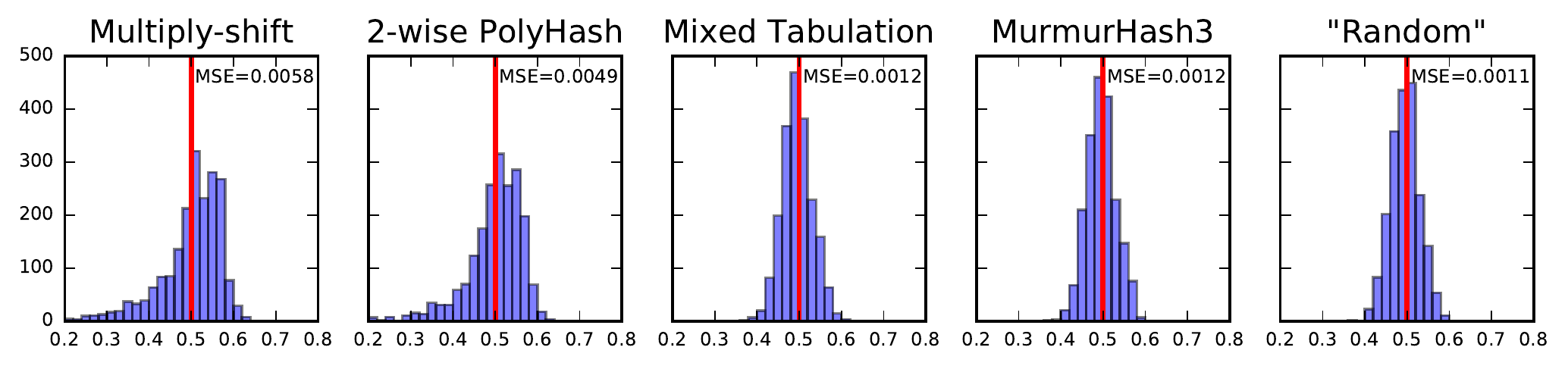}
    \caption{Histograms of set similarity estimates obtained using OPH with
    densification of \cite{Shrivastava14densify} on synthetic data implemented
    with different basic hash families and $k=200$.
    The mean squared error for each hash function is displayed in the
    top right corner.}
    \label{fig:synth_oph}
\end{figure}

For FH we obtained a vector $v$ by taking the indicator vector of
a set $A$ generated as above and normalizing the length. For each hash function
we perform $2000$ independent repetitions of the following experiment: Generate
$v'$ using FH and calculate $\|v'\|_2^2$. Using a good hash
function we should get good concentration of this value around $1$.
Figure~\ref{fig:synth_fh} displays the histograms and MSE we obtained for $d'=200$.
Again we see that multiply-shift and 2-wise PolyHash give poorly concentrated
results, and while the results are not biased this is only because of a very
heavy tail of large values. We also ran experiments with $d'=100$ and $d'=500$
which were similar.
\begin{figure}[htbp]
    \centering
    \includegraphics[width = \textwidth]{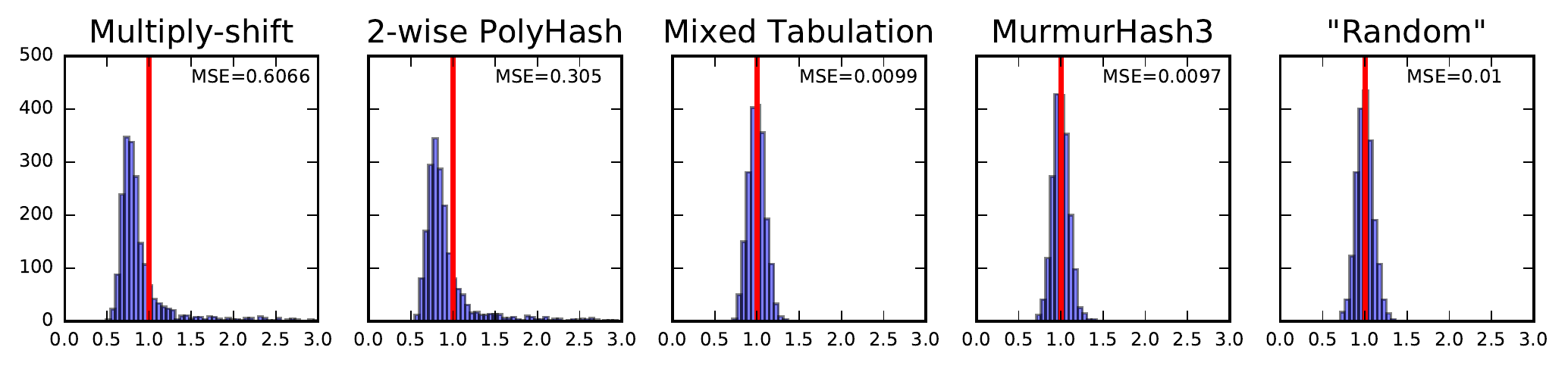}
    \caption{Histograms of the 2-norm of the vectors output by FH on synthetic
    data implemented with different basic hash families and $d'=200$.
    The mean squared error for each hash function is displayed in the
    top right corner.}
    \label{fig:synth_fh}
\end{figure}

We briefly argue that this input is in fact quite natural: When encoding a
document as shingles or bag-of-words, it is quite common to let frequent
words/shingles have the lowest identifier (using fewest bits). In this case the
intersection of two sets $A$ and $B$ will likely be a dense subset of small
identifiers. This is also the case when using Huffman Encoding \cite{huf52}, or
if identifiers are generated on-the-fly as words occur. Furthermore, for images
it is often true that a pixel is more likely to have a non-zero value if its
neighbouring pixels have non-zero values giving many consecutive non-zeros.

\paragraph{Additional synthetic results}
We also considered the following synthetic dataset, which actually showed even
more biased and poorly concentrated results. For similarity estimation we used
elements from $[4n]$, and let the symmetric difference be uniformly random
sampled elements from $\{0\ldots, n-1\}\cup\{3n,\ldots,4n-1\}$ with probability
$1/2$ and the intersection be the same but for $\{n,\ldots, 3n-1\}$. This gave
an MSE that was rougly $6$ times larger for multiply-shift and $4$ times larger
for $2$-wise PolyHash compared to the other three. For
feature hashing we sampled the numbers from $0$ to $3n-1$ independently at
random with probability $1/2$ giving an MSE that was $20$ times higher for
multiply-shift and $10$ times higher for 2-wise PolyHash.

We also considered both datasets without the sampling, which showed an even
wider gap between the hash functions.

\subsection{Real-world data}

We consider the following real-world data sets
\begin{itemize}
    \item \textbf{MNIST} \cite{mnist} Standard collection of handwritten
        digits. The average number of non-zeros is roughly 150 and the total
        number of features is 728.
        We use the standard partition of 60000
        database points and 10000 query points.
    \item \textbf{News20} \cite{ChangL11} Collection of newsgroup documents.
        The average number of non-zeros is roughly 500 and the total number of
        features is roughly $1.3\cdot 10^6$.
        We randomly split the set into two
        sets of roughly 10000 database and query points.
\end{itemize}
These two data sets cover both the sparse and dense regime, as well as the
cases where each data point is similar to many other points or few other
points. For MNIST this number is roughly 3437 on average and for News20 it is
roughly 0.2 on average for similarity threshold above $1/2$.

\paragraph{Feature hashing}
We perform the same experiment as for synthetic data by calculating
$\|v'\|_2^2$ for each $v$ in the data set with $100$ independent repetitions of
each hash function (i.e. getting $6,000,000$ estimates for MNIST). Our results
are shown in Figure~\ref{fig:real_fh} for output dimension $d' = 128$. Results
with $d'=64$ and $d'=256$ were similar.
\begin{figure}[htbp]
    \centering
    \includegraphics[width = \textwidth]{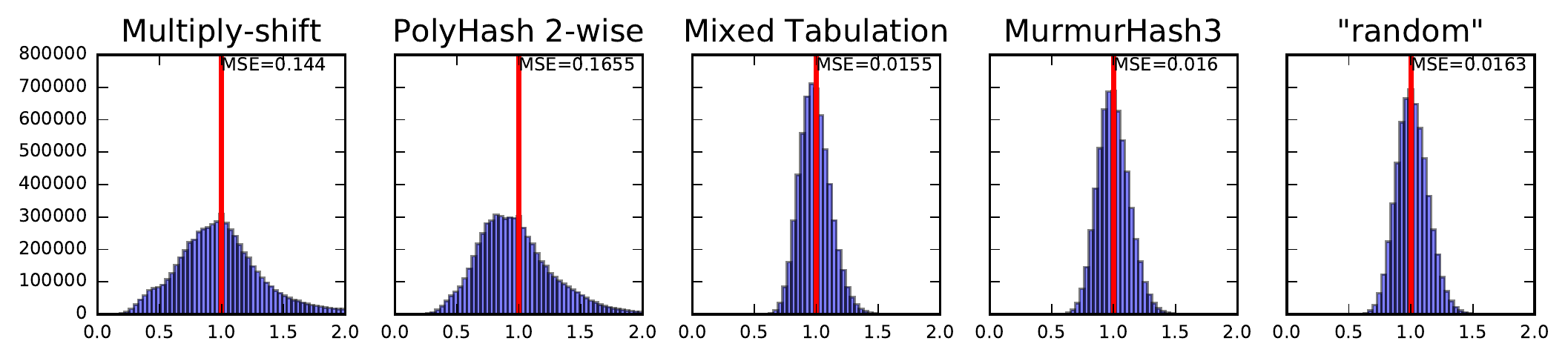}
    
    \includegraphics[width = \textwidth]{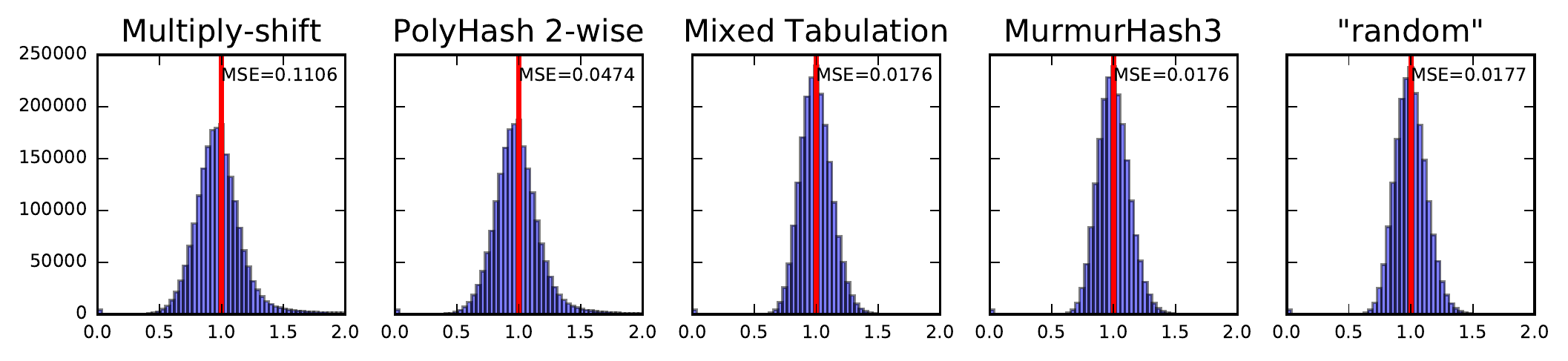}
    \caption{Histograms of the norm of vectors output by FH on the
    MNIST (top) and News20 (bottom) data sets implemented with different basic
    hash families and $d'=128$.
    The mean squared error for each hash function is displayed in the
    top right corner.}
    \label{fig:real_fh}
\end{figure}
The results confirm the theory and show that mixed tabulation performs
essentially as well as a truly random hash function
clearly outperforming the weaker hash functions, which produce poorly
concentrated results. This is particularly clear for the MNIST data set, but
also for the News20 dataset, where e.g. 2-wise Polyhash resulted in
$\|v'\|_2^2$ as large as $16.671$ compared to $2.077$ with mixed tabulation.

\paragraph{Similarity search with LSH}
We perform a rigorous evaluation based on the setup of
\cite{Shrivastava14oneperm}. We test all combinations of $K\in\{8,10,12\}$ and
$L\in\{8,10,12\}$. For readability we only provide results for multiply-shift
and mixed tabulation and note that the results obtained for 2-wise PolyHash and
MurmurHash3 are essentially identical to those
for multiply-shift and mixed tabulation respectively.

Following \cite{Shrivastava14oneperm} we evaluate the
results based on two metrics: 1) The fraction of total data points retrieved
per query, and 2) the \emph{recall} at a given threshold $T_0$ defined as the
ratio of retrieved data points having similarity at least $T_0$ with the query
to the total number of data points having similarity at least $T_0$ with
the query. Since the recall may be inflated by poor hash functions that just
retrieve many data points, we instead report \#retrieved/recall-ratio, i.e. the
number of data points that were retrieved divided by the percentage of recalled
data points. The goal is to minimize this ratio as we want to simultaneously
retrieve few points and obtain high recall. Due to space restrictions we
only report our results for $K=L=10$.
We note that the other results were similar.

Our results can be seen in Figure~\ref{fig:lsh}. The results somewhat echo what
we found on synthetic data. Namely, 1) Using multiply-shift overestimates the
similarities of sets thus retrieving more points, and 2) Multiply-shift gives
very poorly concentrated results. As a consequence of 1) Multiply-shift does,
however, achieve slightly higher recall (not visible in the figure), but
despite recalling slightly more points, the \#retrieved / recall-ratio of
multiply-shift is systematically worse.

\begin{figure}[htbp]
    \begin{center}
        \includegraphics[height=.24\textheight]{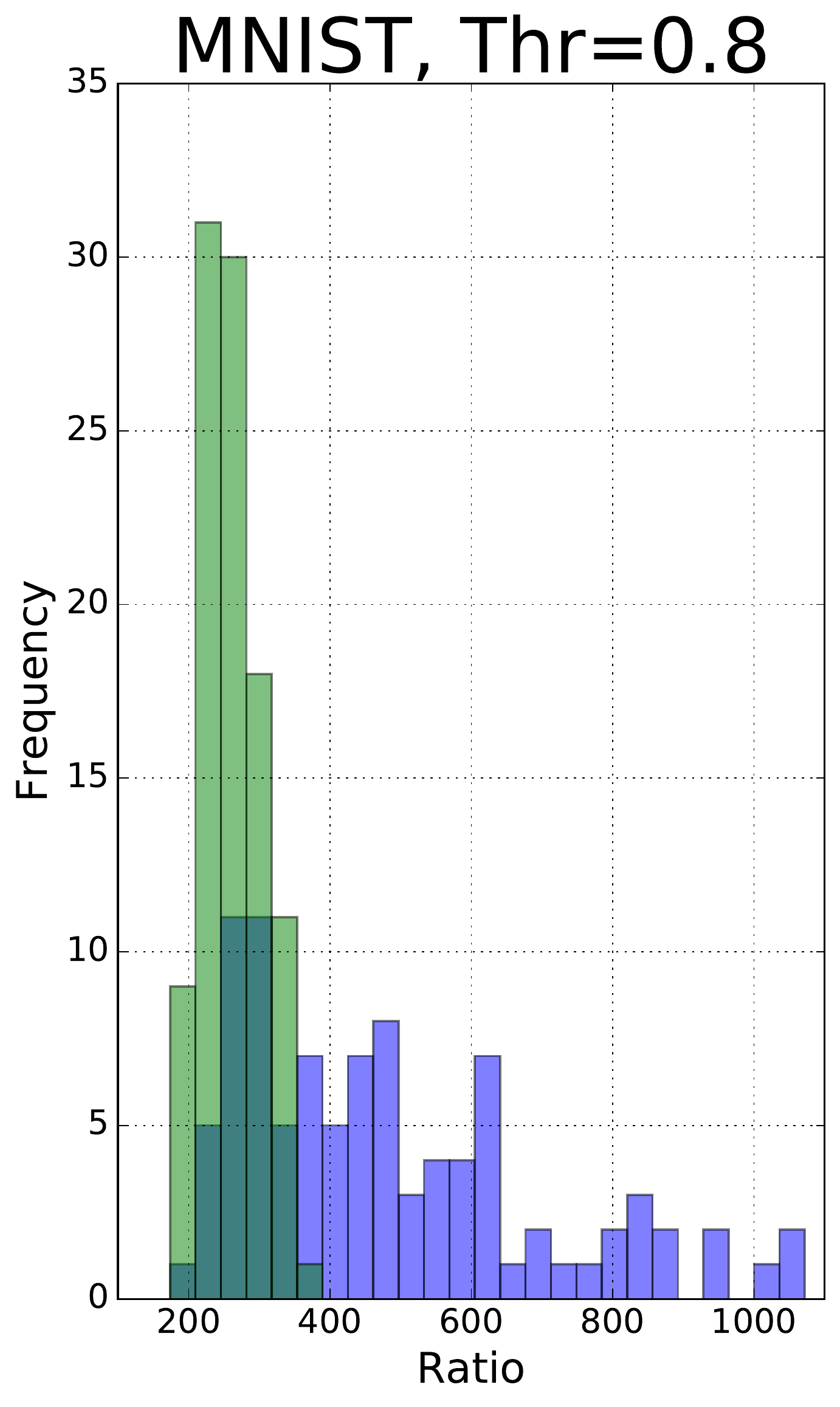}
        \includegraphics[height=.24\textheight]{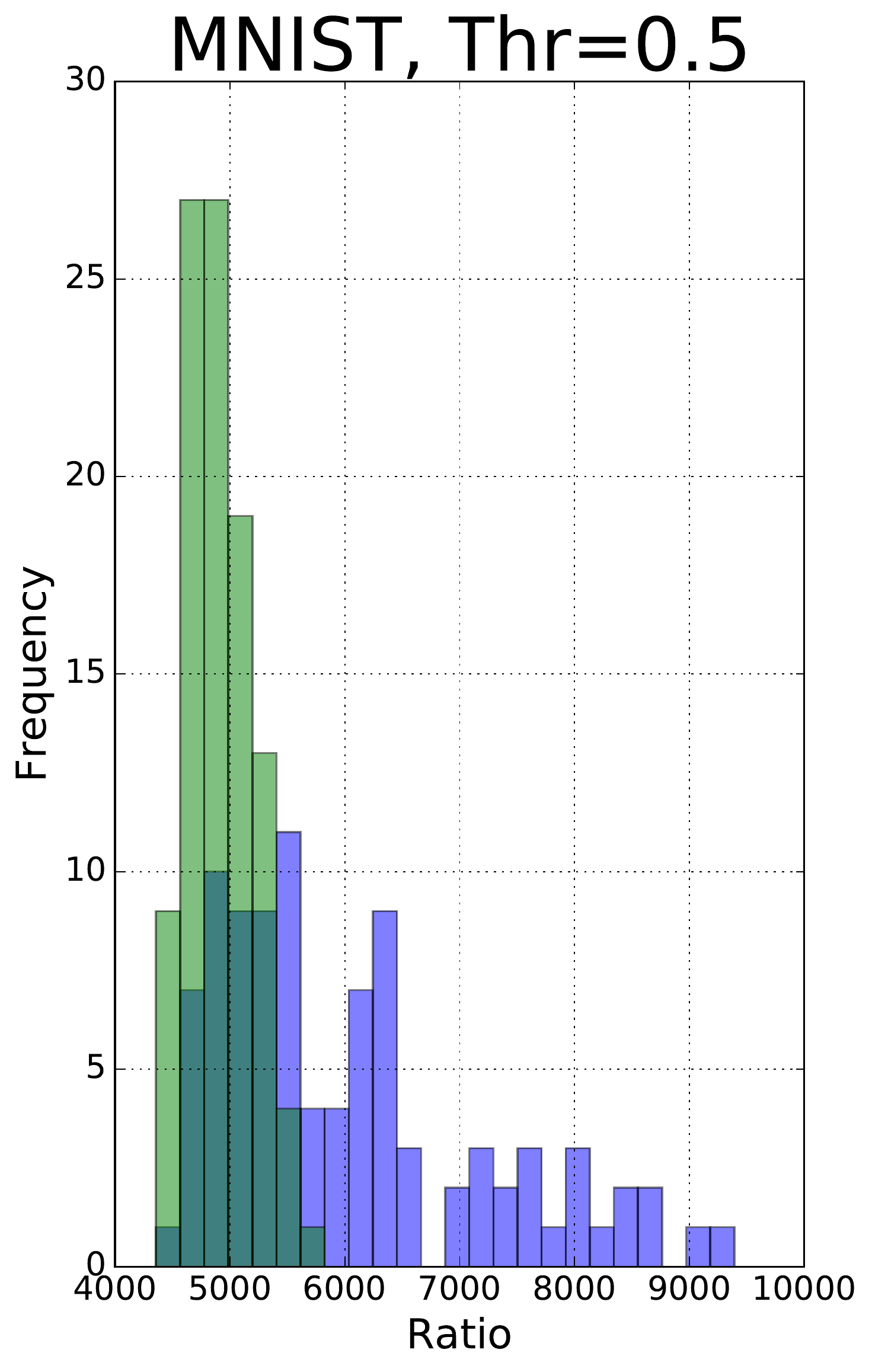}
        \includegraphics[height=.24\textheight]{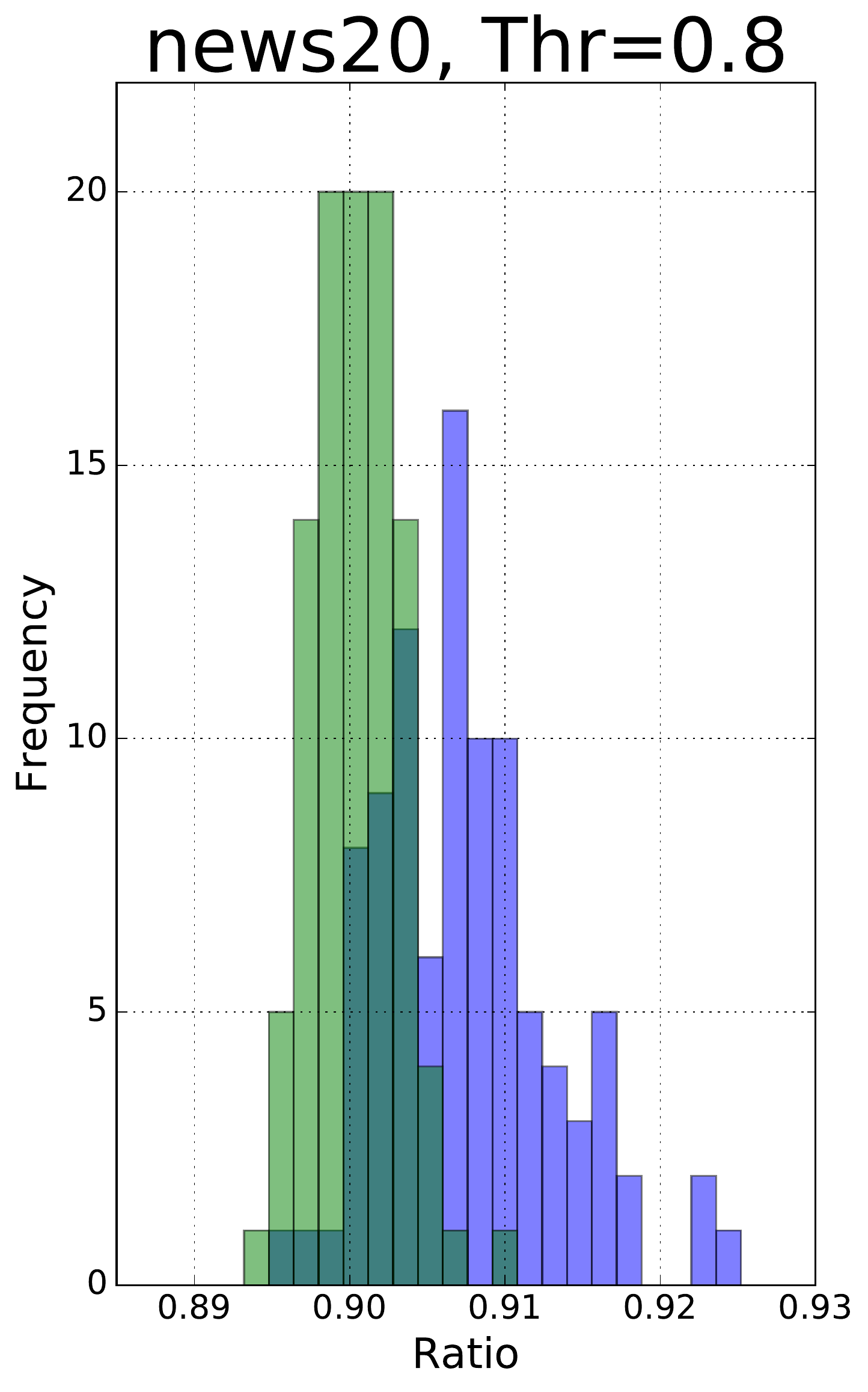}
        \includegraphics[height=.24\textheight]{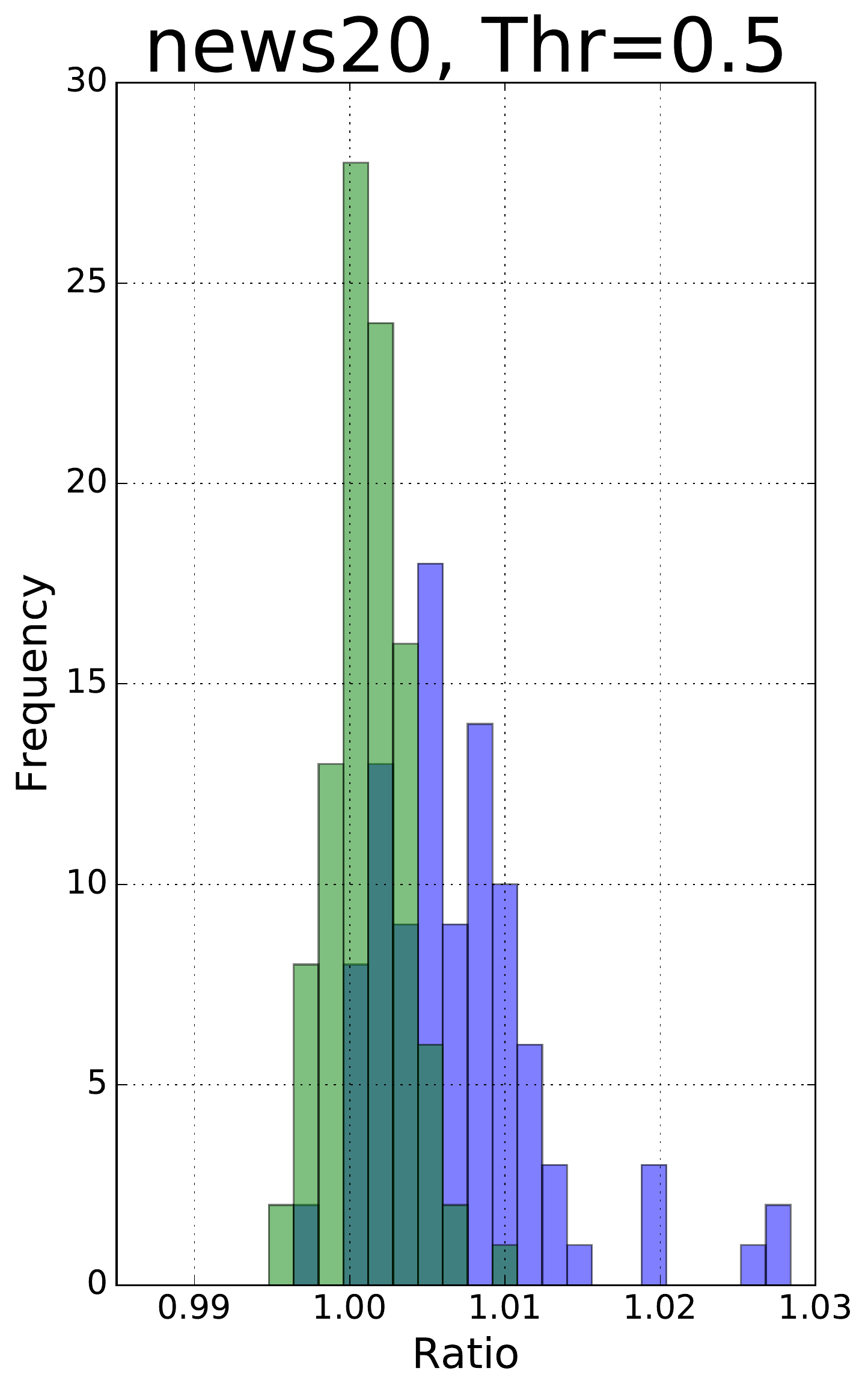}

        \caption{Experimental evaluation of LSH with OPH and different hash
        functions with $K=L=10$. The hash functions used are multiply-shift
        (blue) and mixed tabulation (green). The value studied is the
        retrieved / recall-ratio (lower is better).}
        \label{fig:lsh}
    \end{center}
\end{figure}

\section{Conclusion}
In this paper we consider mixed tabulation for computational primitives in
computer vision, information retrieval, and machine learning. Namely,
similarity estimation and feature hashing. It was previously
shown~\cite{DahlgaardKRT15} that mixed tabulation provably works essentially as
well as truly random for similarity estimation with one permutation hashing. We
complement this with a similar result for FH when the input vectors are sparse,
even improving on the concentration bounds for truly random hashing found by
\cite{WeinbergerDLSA09,DasguptaKS10}.

Our empirical results demonstrate this in practice. Mixed tabulation
significantly outperforms the simple hashing schemes and is not much slower.
Meanwhile, mixed tabulation is 40\% faster than both MurmurHash3 and CityHash,
which showed similar performance as mixed tabulation. However, these two hash
functions do not have the same theoretical guarantees as mixed tabulation. We
believe that our findings make mixed tabulation the best candidate for
implementing these applications in practice.

\section*{Acknowledgements} 
The authors gratefully acknowledge support from Mikkel Thorup's Advanced Grant
\emph{DFF-0602-02499B} from the \emph{Danish Council for Independent Research}
as well as the \emph{DABAI project}.
Mathias Bæk Tejs Knudsen gratefully acknowledges support from the \emph{FNU
project AlgoDisc}.

\bibliographystyle{plainurl}
\bibliography{mlstuff}

\appendix
\newpage
\centerline{\Huge Appendix}
\section{Omitted proofs}\label{app:proofs}
Below we include the proofs that were omitted due to space constraints.

We are going to use the following corollary of \cite[Theorem 2.1]{fanGL2012hoeffding}.
\begin{corollary}[\cite{fanGL2012hoeffding}]
	\label{cor:generalChernoff}
	Let $X_1,X_2,\ldots,X_n$ be random variables with $\Ep{X_i \mid X_1,\ldots,X_{i-1}} = 0$
	for each $i = 1,2,\ldots,n$ and let $X_{\le k}$ and $V_k$ be defined by:
	\begin{align*}
		X_{\le k} = \sum_{i = 1}^k X_i
		, \, 
		V_k = \sum_{i=1}^k \Ep{X_i^2 \mid X_1,\ldots,X_{i-1}}
		\, .
	\end{align*}
	
	For $c, \sigma^2, M > 0$ it holds that:
	\begin{align*}
		\Prp{
			\exists k: \abs{X_{\le k}} \ge c\sigma^2, \,
			V_k \le \sigma^2
		}
		\le 
		2 \exp \left (
			- \frac{1}{2} c \log (1 + cM)
			\sigma^2 / M
		\right )
		+
		\Prp{\max_k \abs{X_k} > M}
		\, .
	\end{align*}
\end{corollary}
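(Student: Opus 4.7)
The target is a Bennett-type maximal inequality for martingale differences that allows unbounded increments, and the natural route is to reduce it to the bounded-increment case of Theorem 2.1 in Fan--Grama--Liu by a standard truncation/stopping argument. First, I would split the event as
\[
\Prp{\exists k: \abs{X_{\le k}} \ge c\sigma^2,\, V_k \le \sigma^2} \le \Prp{\exists k: \abs{X_{\le k}} \ge c\sigma^2,\, V_k \le \sigma^2,\, \max_k\abs{X_k}\le M} + \Prp{\max_k \abs{X_k} > M},
\]
so that the additive tail term on the right-hand side of the corollary is accounted for immediately.

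Next I would pass to the stopped martingale $X_{\le k \wedge \tau}$ with stopping time $\tau = \inf\set{i : \abs{X_i} > M}$. Its increments are bounded by $M$ in absolute value, its conditional second moments are pathwise dominated by those of the original $X_i$ (so that $V_{k\wedge\tau}\le V_k$), and the stopped and unstopped partial sums coincide on the event $\set{\max_k\abs{X_k}\le M}$. Hence the first probability on the right-hand side above is dominated by the corresponding probability for the stopped martingale, to which Theorem 2.1 of Fan--Grama--Liu applies directly.

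That theorem gives a one-sided maximal bound of the form $\exp(-(\sigma^2/M^2)\, B(xM/\sigma^2))$ with $B(t) = (1+t)\log(1+t) - t$ for martingales with increments bounded by $M$. Plugging in $x = c\sigma^2$ gives the exponent $(\sigma^2/M^2)\, B(cM)$, and the elementary bound $B(t) \ge \tfrac{1}{2}\, t\log(1+t)$ (which follows, e.g., from $\log(1+t)\ge t/(1+t)$) reduces this to $\tfrac{1}{2}\, c\log(1+cM)\,\sigma^2/M$, matching the exponent in the corollary. Running the same argument on the martingale $-X_i$ yields the symmetric lower tail and supplies the leading factor of $2$.

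The main point that needs care is the bookkeeping in the truncation step: one must check that stopping preserves the martingale property and that the conditional second moments of the stopped sequence are pathwise dominated by those of the original sequence, so that the hypothesis $V_k\le \sigma^2$ transfers to $V_{k\wedge\tau}\le\sigma^2$, and that on the good event the stopped and unstopped partial sums really do agree. Everything after that is the calculus inequality $B(t)\ge \tfrac{1}{2}t\log(1+t)$, which is where the explicit constant $\tfrac{1}{2}$ in the exponent of the corollary is introduced.
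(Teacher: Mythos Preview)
Your split into the good event $\set{\max_k\abs{X_k}\le M}$ and its complement, the union bound for the two-sided tail, and the simplification of the Bennett exponent via $B(t)\ge\tfrac12\,t\log(1+t)$ all match the paper. The gap is in the stopping step. With $\tau=\inf\set{i:\abs{X_i}>M}$, the increment of the stopped martingale at time $i$ equals $X_i$ when $i\le\tau$ and $0$ when $i>\tau$; in particular, at $i=\tau$ the increment is $X_\tau$, which by definition satisfies $\abs{X_\tau}>M$. So the stopped process is indeed still a martingale with predictable quadratic variation dominated by $V_k$ (since $\set{\tau\ge i}$ depends only on $X_1,\ldots,X_{i-1}$), but it does \emph{not} have increments bounded by $M$, and the bounded-increment case of Fan--Grama--Liu therefore does not apply to it. Stopping one step earlier would give bounded increments, but $\tau-1$ is not a stopping time for this filtration.

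The paper handles this by truncating rather than stopping: it sets $X_i'=X_i$ if $\abs{X_i}\le M$ and $X_i'=0$ otherwise, so that $\abs{X_i'}\le M$ always, and on $\set{\max_k\abs{X_k}\le M}$ the sequences $X$ and $X'$ (hence the events in question) coincide. It then applies Theorem~2.1 together with Remark~2.1 of Fan--Grama--Liu to $(X_i'/M)$, obtaining a closed-form bound which it simplifies to the exponential in the corollary; the factor $2$ comes from the same union bound you describe. Note that truncation can perturb the conditional means of the increments, so the bookkeeping you flagged shifts to checking that the form of Fan--Grama--Liu being invoked (via their Remark~2.1) still covers $(X_i')$; that is what replaces your incorrect boundedness claim for the stopped process.
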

\begin{proof}
Let $X_i' = X_i$ if $\abs{X_i} \le M$ and $X_i' = 0$ otherwise
Let $X_{\le k}' = \sum_{i=1}^k X_i'$ and $V_k' = \sum_{i=1}^k \Ep{X_i'^2 \mid X_1',\ldots,X_{i-1}'}$.
By Theorem 2.1 applied on $(X_i'/M)_{i \in \set{1,\ldots,n}}$
and Remark 2.1 in \cite{fanGL2012hoeffding} it holds that:
\begin{align*}
	\Prp{
		\exists k: X_{\le k}' \ge c\sigma^2, \,
		V_k' \le \sigma^2
	}
	& =
	\Prp{
		\exists k: X_{\le k}'/M \ge c\sigma^2/M, \,
		V_k'/M \le \sigma^2/M^2
	}
	\\
	& \le 
	\left ( \frac{\sigma^2/M^2}{c\sigma^2/M + \sigma^2/M^2} \right )^{c\sigma^2/M + \sigma^2/M^2}
	e^{c\sigma^2/M}
	\\
	& \le 
	\exp \left (
		- \frac{1}{2} c \log (1 + cM)
		\sigma^2 / M
	\right )
	\, ,
\end{align*}
where the second inequality follows from a simple calculation.
By the same reasoning we get that same upper bound on the probability that
there exists $k$ such that $X_{\le k}' \le -c\sigma^2$ and $V_k' \le \sigma^2$.
So by a union bound:
\begin{align*}
	\Prp{
		\exists k: \abs{X_{\le k}'} \ge c\sigma^2, \,
		V_k' \le \sigma^2
	}
	\le 
	2\exp \left (
		- \frac{1}{2} c \log (1 + cM)
		\sigma^2 / M
	\right )
	\, .
\end{align*}
If there exists $k$ such that $\abs{X_{\le k}} \ge c\sigma^2$ and $V_k \le \sigma^2$,
then either $X_i = X_i'$ for each $i \in \set{1,2,\ldots,n}$ and therefore
$\abs{X_{\le k}'} \ge c\sigma^2$ and $V_k' \le \sigma^2$. Otherwise, there exists 
$i$ such that $X_i \neq X_i'$ and therefore $\max_i \set{\abs{X_i}} > M$. Hence
we get that:
\begin{align*}
	\Prp{
		\exists k: \abs{X_{\le k}'} \ge c\sigma^2, \,
		V_k' \le \sigma^2
	}
	& \le 
	\Prp{
		\exists k: \abs{X_{\le k}} \ge c\sigma^2, \,
		V_k \le \sigma^2
	}
	+
	\Prp{\max_k \abs{X_k} > M}
	\\
	& \le 
	2\exp \left (
		- \frac{1}{2} c \log (1 + cM)
		\sigma^2 / M
	\right )
	+
	\Prp{\max_k \abs{X_k} > M}
	\, .
\end{align*}
\end{proof}

Below follows the proof of Theorem~\ref{thm:fhash}. We restate it here as
Theorem~\ref{thm:featurehashingimprovement} with slightly different notation.
\begin{theorem}
	\label{thm:featurehashingimprovement}
	Let $d,d'$ be dimensions, and $v \in \R^d$ a vector. Let $h : [d] \to [d']$
	and $\sgn: [d] \to \{-1,+1\}$ be uniformly random hash functions. Let
	$v' \in \R^{d'}$ be the vector defined by:
	\begin{align*}
		v_i' = \sum_{j, h(j) = i} \sgn(j) v_j\, .
	\end{align*}
	Let $\eps, \delta \in \left(0,1\right)$. If $d' \ge \alpha$,
	$\norm{v}_2 = 1$ and $\norm{v}_{\infty} \le \beta $,
	where $\alpha = 16 \eps^{-2}\lg(1/\delta)$ and $\beta =
	\frac{\sqrt{\eps \log\!\left(1+\frac{4}{\eps}\right)}}{6\sqrt{\log(1/\delta)\log(d'/\delta)}}$,
	then
	\begin{align*}
		\Prp{1-\eps < \norm{v'}_2^2 < 1+\eps} \ge 
		1 - 4\delta
		\, .
	\end{align*}
\end{theorem}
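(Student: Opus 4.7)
The natural starting point is to expand
\[
\|v'\|_2^2 = \sum_{i\in[d']}(v'_i)^2 = \sum_{j\in[d]} v_j^2 + 2\sum_{j<k,\, h(j)=h(k)} \sgn(j)\sgn(k)\,v_j v_k,
\]
so that since $\|v\|_2=1$ the task reduces to showing that the off-diagonal collision sum $Y = \sum_{j<k,\,h(j)=h(k)} \sgn(j)\sgn(k)\,v_j v_k$ satisfies $|Y|\le \eps/2$ with probability at least $1-4\delta$. I would fix an arbitrary ordering of $[d]$, reveal the pairs $(h(j),\sgn(j))$ one coordinate at a time, and let $F_{j-1}$ be the $\sigma$-algebra generated by the first $j-1$ such pairs. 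Setting $X_j = v_j\sgn(j)\sum_{k<j,\,h(k)=h(j)} v_k\sgn(k)$ gives a martingale difference sequence ($\sgn(j)$ is independent of $F_{j-1}\cup\{h(j)\}$ and mean zero, so $\Ep{X_j\mid F_{j-1}}=0$), and $\sum_j X_j = Y$.

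Once the martingale structure is in place, the plan is to invoke Corollary~\ref{cor:generalChernoff} with $c\sigma^2 = \eps/2$. The point of using the Fan--Ge--Liu bound rather than a straight Bernstein/Freedman inequality is the extra $\log(1+cM)$ factor in the exponent: this is precisely what lets us save a factor of $\sqrt{\log(d'/\delta)\log(1/\eps)}$ over the previous bounds of \cite{WeinbergerDLSA09,DasguptaKS10}. So the proof reduces to two auxiliary tail bounds: (i) a high-probability upper bound $V_n\le \sigma^2$ on the predictable quadratic variation, with $\sigma^2 = O(1/d')$ up to logarithmic factors, and (ii) a high-probability uniform bound $\max_j|X_j|\le M$, with $M$ small enough that the exponent $\tfrac12 c\log(1+cM)\,\sigma^2/M$ beats $\log(1/\delta)$.

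For (i), since $X_j^2 = v_j^2 Y_j^2$ with $Y_j=\sum_{k<j,\,h(k)=h(j)} v_k\sgn(k)$, I would compute $\Ep{Y_j^2\mid F_{j-1}} = \tfrac{1}{d'}\sum_{k<j} v_k^2 + \tfrac{2}{d'}\sum_{k<k'<j,\,h(k)=h(k')} v_k v_{k'}\sgn(k)\sgn(k')$, so that $\Ep{V_n}\le \tfrac{1}{2d'}$ and the deviation of $V_n$ from its expectation is itself a martingale of the same form (to be controlled by a second, cruder application of the same inequality, or by a direct moment bound). For (ii), condition on $h$: then $Y_j$ is a Rademacher sum with variance $\sum_{k<j,\,h(k)=h(j)} v_k^2$, so by Hoeffding $|Y_j| \lesssim \|v\|_\infty\sqrt{N_j\log(d'/\delta)}$ where $N_j$ is the size of the bin $h(j)$ restricted to $\{k<j\}$; a Chernoff bound on bin sizes shows $N_j \lesssim \log(d'/\delta)$ for all $j$ whp, yielding $M \lesssim \|v\|_\infty\log(d'/\delta)$.

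The main obstacle is balancing the parameters so the three failure probabilities (for $V_n$, for $\max_j|X_j|$, and the Fan--Ge--Liu term) are each at most $\delta$ simultaneously, and verifying that the hypothesis $\|v\|_\infty \le \tfrac{\sqrt{\eps\log(1+4/\eps)}}{6\sqrt{\log(1/\delta)\log(d'/\delta)}}$ is exactly what makes the exponent $\tfrac12 c\log(1+cM)\sigma^2/M$ reach $\log(1/\delta)$. This is the step where the $\log(1+cM)$ factor is cashed in: with $c\sim \eps d'$ and $M\sim \|v\|_\infty\log(d'/\delta)$, the assumption on $\|v\|_\infty$ makes $cM$ of order $1/\eps$, so $\log(1+cM) = \Theta(\log(1/\eps))$, delivering both claimed improvements over \cite{WeinbergerDLSA09,DasguptaKS10} in a single step.
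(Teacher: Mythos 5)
Your skeleton matches the paper's: the same martingale differences $X_j = v_j\sgn(j)\,v'^{(j-1)}_{h(j)}$ (up to a factor $2$), the same conditional variance $\Ep{X_j^2\mid F_{j-1}} = \tfrac{v_j^2}{d'}\norm{v'^{(j-1)}}_2^2$, and the same key inequality. But the two places where you defer the work are exactly where the proof lives, and as sketched neither closes. First, the quadratic variation: you propose to prove $V_n\le\sigma^2$ whp by ``a second, cruder application'' to the martingale $V_n-\Ep{V_n}$, which is a problem of essentially the same type as the one you started with. The paper avoids this entirely with a deterministic stopping-time observation: at the \emph{first} index $k_0$ where $\abs{X_{\le k_0}}\ge\eps$, one has $\norm{v'^{(k)}}_2^2\le 1+\eps$ for all $k<k_0$, hence $V_{k_0}\le 4(1+\eps)/d'\le 8/d'$ with probability $1$. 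This is precisely why Corollary~\ref{cor:generalChernoff} is stated with ``$\exists k$''; no separate tail bound on $V_n$ is needed. Second, your bound on $\max_j\abs{X_j}$: the claim that a Chernoff bound gives bin sizes $N_j\lesssim\log(d'/\delta)$ is false in the regime feature hashing is designed for ($d\gg d'$, where bins contain $\Theta(d/d')$ items); the quantity that is small is the per-bin $\ell_2$ mass $\sum_{k:h(k)=i}v_k^2$, not the count, and controlling it is yet another concentration step.

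More importantly, the parameter balancing in your last paragraph does not work out. With $c\sigma^2=\eps/2$, $\sigma^2=\Theta(1/d')$ and your $M\sim\norm{v}_\infty\log(d'/\delta)$, the exponent $\tfrac12 c\log(1+cM)\sigma^2/M=\tfrac{\eps}{4M}\log(1+cM)$ reaches $\log(1/\delta)$ only if $\log(1+cM)\gtrsim\tfrac{4M\log(1/\delta)}{\eps}$; at the boundary $d'=\alpha$ this right-hand side is polynomial in $\sqrt{\log(1/\eps)/\eps}$ while the left-hand side is only logarithmic, so the bound falls short for small $\eps$. The paper's choice is different: in the main application it takes the much smaller $M=\tfrac{8}{\eps\alpha}$ and uses only the trivial estimate $\log(1+cM)\ge\log(1+d'/\alpha)\ge\log 2$, pushing all the difficulty into $\Prp{\max_k\abs{X_k}>M}$. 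That event is controlled by a \emph{second} application of Corollary~\ref{cor:generalChernoff}, this time to the coordinate martingales $k\mapsto v'^{(k)}_i$ with the deterministic increment bound $M=\norm{v}_\infty$ and $cM=4d'/(\eps\alpha)\ge 4/\eps$ --- and it is \emph{there}, not in the main application, that the $\log(1+4/\eps)$ factor in $\beta$ is cashed in. Your plan to handle this step with plain Hoeffding (sub-Gaussian tails) would forfeit exactly the Bennett-type gain that produces the claimed improvement over \cite{WeinbergerDLSA09,DasguptaKS10}.
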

\begin{proof}[Proof of Theorem \ref{thm:featurehashingimprovement}]
Let $v'^{(k)}$ be the vector defined by:
\begin{align*}
	v'^{(k)}_i = \sum_{j \le k, h(j) = i} \sgn(j) v_j
	\, ,
	k \in \set{1,2,\ldots,d}
\end{align*}
We note that $v'^{(d)} = v'$. Let $X_k = \norm{v'^{(k)}}_2^2 - \norm{v'^{(k-1)}}_2^2 - v_k^2$.
A simple calculation gives that:
\begin{align*}
	\Ep{X_k \mid X_1,\ldots,X_{k-1}}
	= 0
	, \,
	\Ep{X_k^2 \mid X_1,\ldots,X_{k-1}}
	=
	\frac{4v_k^2}{d'} \cdot \norm{v'^{(k-1)}}_2^2
	\, .
\end{align*}
As in Corollary \ref{cor:generalChernoff} we define $X_{\le k} = X_1 + \ldots + X_k$
and $V_k = \sum_{i = 1}^k \Ep{X_k^2 \mid X_1,\ldots,X_{k-1}}$.
We see that $X_{\le d} = \norm{v'}_2^2 - \norm{v}_2^2 = \norm{v'}_2^2 - 1$.

Assume that $\abs{X_{\le d}} \ge \eps$, and let $k_0$ be the smallest integer such
that $X_{\le k_0} \ge \eps$. Then $\norm{v'^{(k)}}_2^2 \le 1+\eps$ for every
$k < k_0$, and therefore $V_{k_0} \le \frac{4(1+\eps)}{d'} \le \frac{8}{d'}$.
So we conclude that if $\abs{X_{\le d}} \ge \eps$ then there exists $k$ such that 
$\abs{X_{\le k}} \ge \eps$ and $V_{k} \le \frac{8}{d'}$.
Hence we get
\begin{align*}
	\Prp{\abs{\norm{v'}_2^2-1} \ge \eps}
	=
	\Prp{\abs{X_{\le d}} \ge \eps}
	\le 
	\Prp{\exists k: X_{\le k} \ge \eps, V_k \le \frac{8}{d'}}
	\, . 
\end{align*}
We now apply Corollary \ref{cor:generalChernoff} with
$\sigma^2 = \frac{8}{d'}$, $c = \frac{\eps d'}{8}$ and 
$M = \frac{8}{\eps \alpha}$ to obtain
\begin{align*}
	\Prp{\exists k: \abs{X_{\le k}} \ge \eps, V_k \le \frac{8}{d'}}
	& =
	\Prp{\exists k: \abs{X_{\le k}} \ge c \sigma^2, V_k \le \sigma^2}
	\\
	& \le 
	2 \exp \left (
		- \frac{1}{2} c \log \left ( 1 + cM \right )
		\sigma^2/M
	\right )
	+
	\Prp{\max_k \abs{X_k} > M}
	\\
	& =
	2 \exp \left (
		- \frac{\eps^2 \alpha}{16}
		\log \left ( 1 + \frac{d'}{\alpha} \right )
	\right )
	+
	\Prp{\max_k \abs{X_k} > \frac{8}{\eps \alpha}}
	\\
	& \le
	2 \exp \left (
		- \frac{\eps^2 \alpha}{16}
		\log(2)
	\right )
	+
	\Prp{\max_k \abs{X_k} > \frac{8}{\eps \alpha}}
	\\
	& =
	2\delta
	+
	\Prp{\max_k \abs{X_k} > \frac{8}{\eps \alpha}}
	\, .
\end{align*}
We see that $\abs{X_k} \le 2 \abs{v_k} \norm{v'^{(k-1)}}_\infty \le 2\norm{v}_\infty \norm{v'^{(k-1)}}_\infty$.
Therefore, by a union bound we get that
\begin{align*}
	\Prp{\max_k \abs{X_k} > \frac{8}{\eps \alpha}}
	\le 
	\Prp{\max_k \norm{v'^{(k)}}_\infty > \frac{4}{\eps \alpha \norm{v}_\infty}}
	\le 
	\sum_{i = 1}^{d'}
		\Prp{\max_k \abs{v'^{(k)}_i} > \frac{4}{\eps \alpha \norm{v}_\infty}}
	\, .
\end{align*}
Fix $i \in \set{1,2,\ldots,d'}$, and let $Z_k = v'^{(k)}_i - v'^{(k-1)}_i$ for $k=1,2,\ldots,d$.
Let $Z_{\le k} = Z_1 + \ldots + Z_k$ and
$W_k = \sum_{i = 1}^k \Ep{Z_k^2 \mid Z_1,\ldots,Z_{k-1}}$. Clearly we have that
$Z_{\le k} = v'^{(k)}_i$. Since the variables $Z_1,\ldots,Z_{d'}$ are independent
we get that
\begin{align*}
	\Ep{Z_k^2 \mid Z_1,\ldots,Z_{k-1}}
	=
	\Ep{Z_k^2}
	=
	\frac{1}{d'} v_k^2
	\, ,
\end{align*}
and in particular $W_k \le \frac{1}{d'}$.
Since $Z_k \in \set{-v_k,0,v_k}$ we always have that $\abs{Z_k} \le \norm{v}_\infty$.
We now apply Corollary \ref{cor:generalChernoff} with $c = \frac{4d'}{\eps \alpha \norm{v}_\infty}$,
$\sigma = \frac{1}{d'}$ and $M = \norm{v}_\infty$:
\begin{align*}
	\Prp{\max_k \abs{v'^{(k)}_i} > \frac{4}{\eps \alpha \norm{v}_\infty}}
	& =
	\Prp{\max_k \abs{Z_k} > c\sigma^2}
	\\
	& \le 
	2\exp\left (
		-\frac{1}{2} \frac{4d'}{\eps \alpha \norm{v}_\infty}
		\log\left(1 + \frac{4d'}{\eps \alpha}\right)
		\frac{1}{d' \norm{v}_\infty}
	\right )
	\\
	& \le
	2\exp\left (
		-\frac{2}{\eps \alpha \norm{v}^2_\infty}
		\log\left(1 + \frac{4}{\eps}\right)
	\right )
	\\
	& \le 
	2\exp\left (
		-\frac{2}{\eps \alpha \beta^2}
		\log\left(1 + \frac{4}{\eps}\right)
	\right )
	\le
	\frac{2\delta}{d'}
	\, .
\end{align*}
Hence we have concluded that 
\begin{align*}
	\Prp{\abs{\norm{v'}_2^2-1} \ge \eps}
	\le 
	2\delta + \sum_{i=1}^{d'} \frac{2\delta}{d'}
	=
	4\delta
	\, .
\end{align*}
Therefore,
\begin{align*}
	\Prp{1-\eps < \norm{v'}_2^2 < 1+\eps}
	=
	1 - \Prp{\abs{\norm{v'}_2^2-1} \ge \eps}
	\ge
	1 - 4 \delta
	\, ,
\end{align*}
as desired.
\end{proof}

\section{Additional experiments}\label{app:figures}
Here we include histograms for some of the additional experiments that were not
included in Section \ref{sec:experiments}.

\begin{figure}[htbp]
    \centering
    \includegraphics[width = \textwidth]{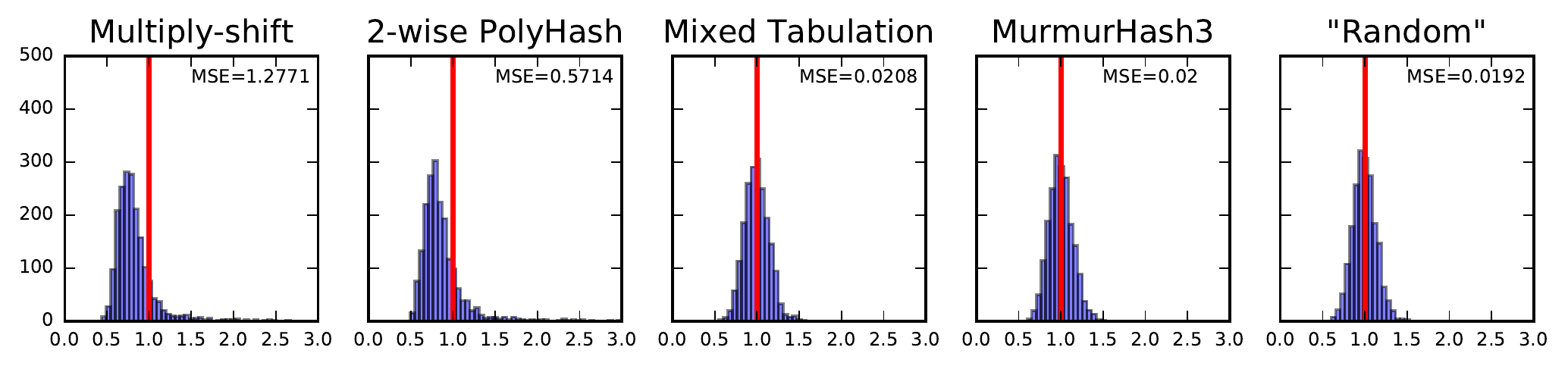}
    \includegraphics[width = \textwidth]{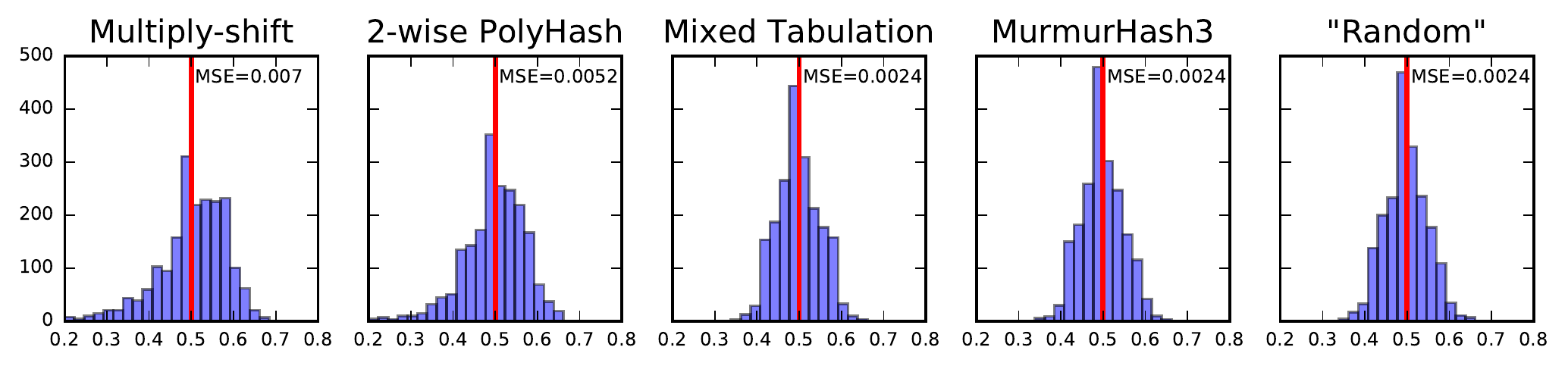}
    
    \caption{Similarity estimation with OPH (bottom) and FH (top) for $k=100$
    and $d'=100$ on the synthetic dataset of Section \ref{sec:synth}.}
\end{figure}

\begin{figure}[htbp]
    \centering
    \includegraphics[width = \textwidth]{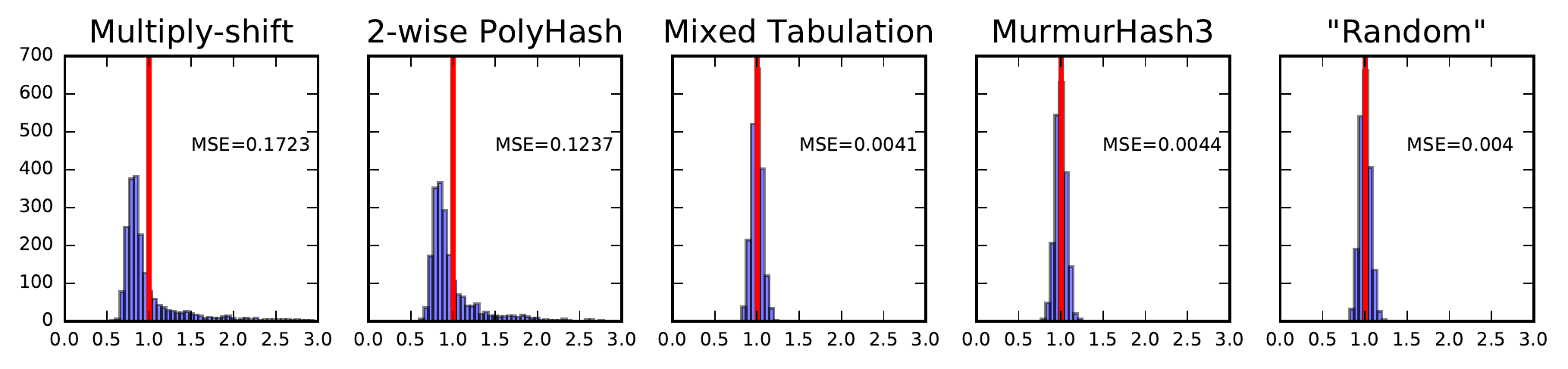}
    \includegraphics[width = \textwidth]{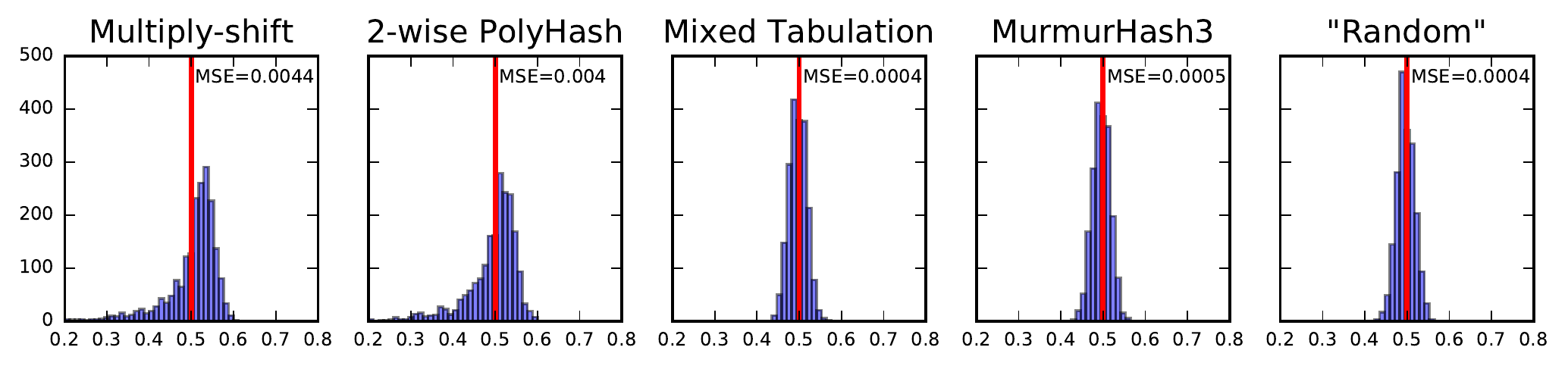}
    
    \caption{Similarity estimation with OPH (bottom) and FH (top) for $k=500$
    and $d'=500$ on the synthetic
    dataset of Section \ref{sec:synth}.}
\end{figure}

\begin{figure}[htbp]
    \centering
    \includegraphics[width = \textwidth]{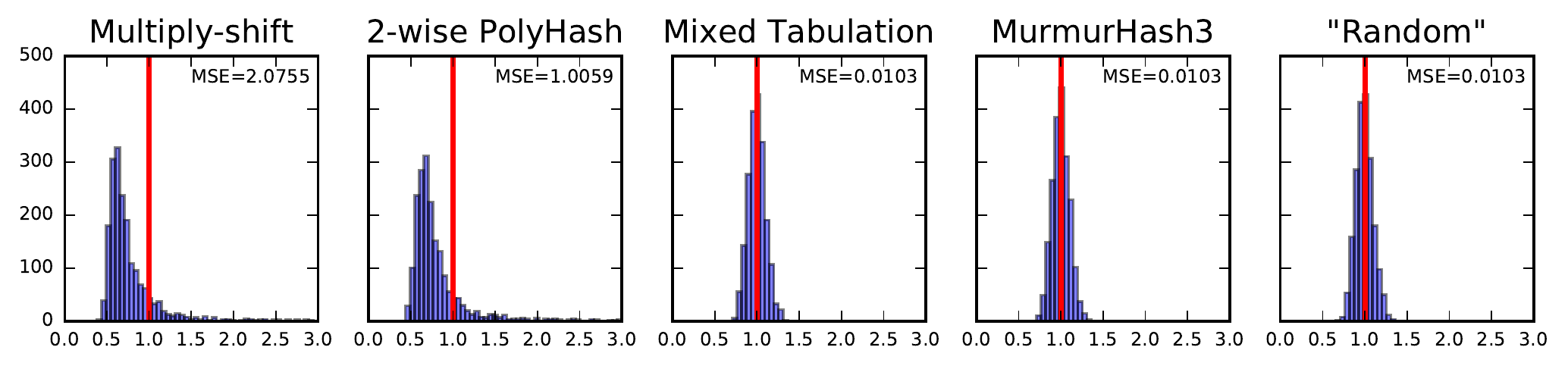}
    \includegraphics[width = \textwidth]{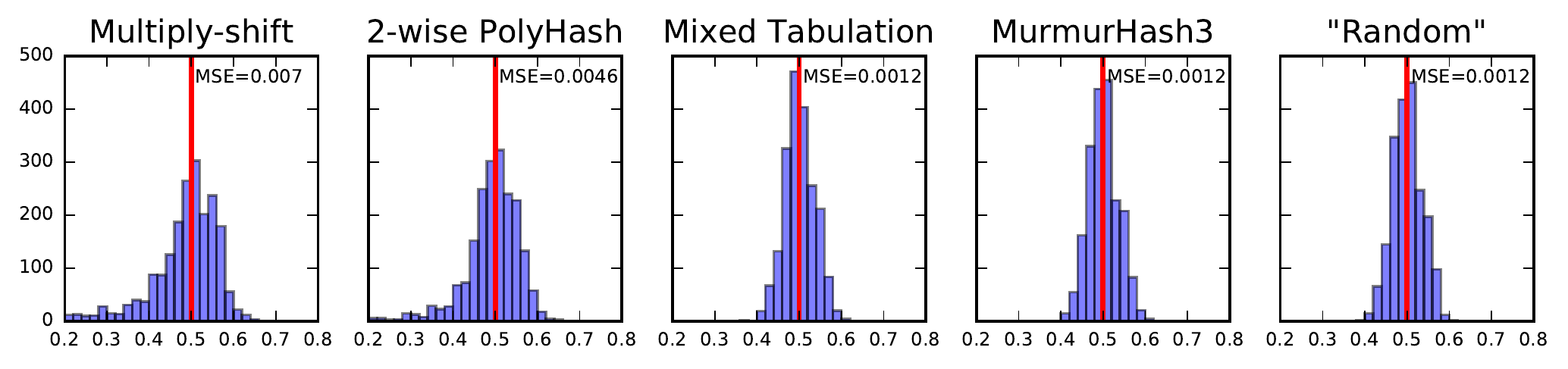}
    
    \caption{Similarity estimation with OPH (bottom) and FH (top) for $k=200$
    and $d'=200$ on the second synthetic
    dataset (with numbers from $[4n]$ with $n=2000$) of Section \ref{sec:synth}.}
\end{figure}

\begin{figure}[htbp]
    \centering
    \includegraphics[width = \textwidth]{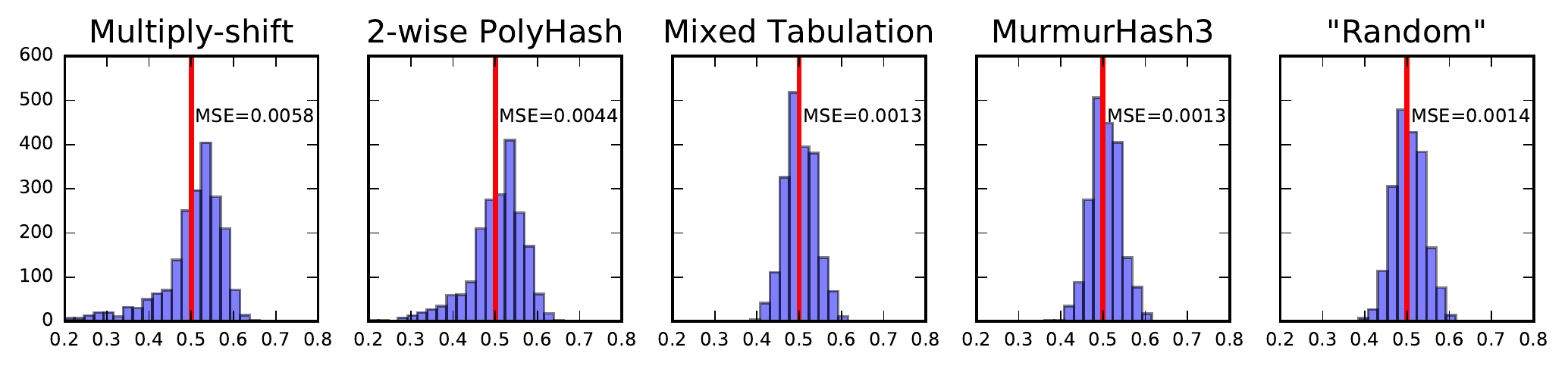}
    
    \caption{Similarity estimation with OPH for $k=200$ with sparse input
    vectors (size $\approx 150$) on the synthetic dataset of Section \ref{sec:synth}.}
\end{figure}

\begin{figure}[htbp]
    \centering
    \includegraphics[width = \textwidth]{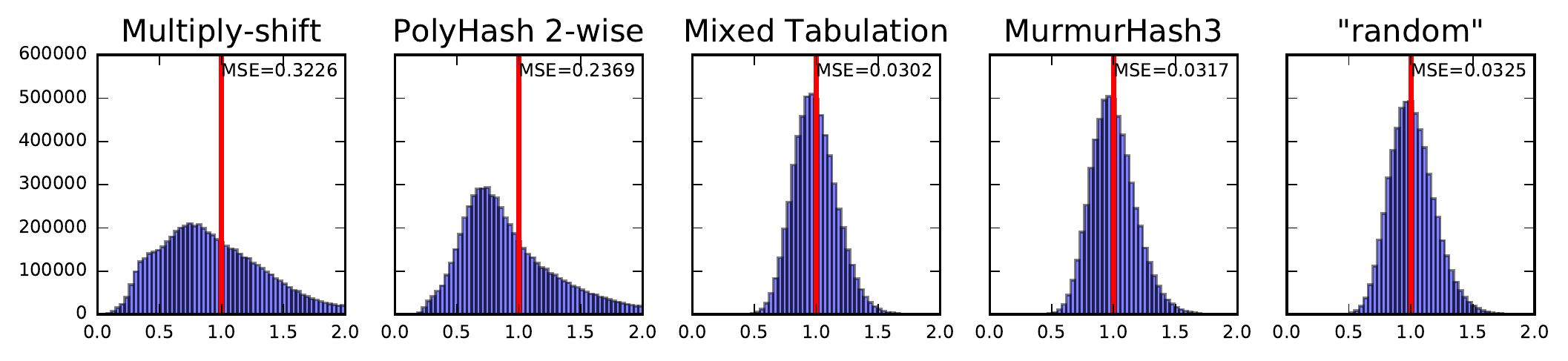}
    \includegraphics[width = \textwidth]{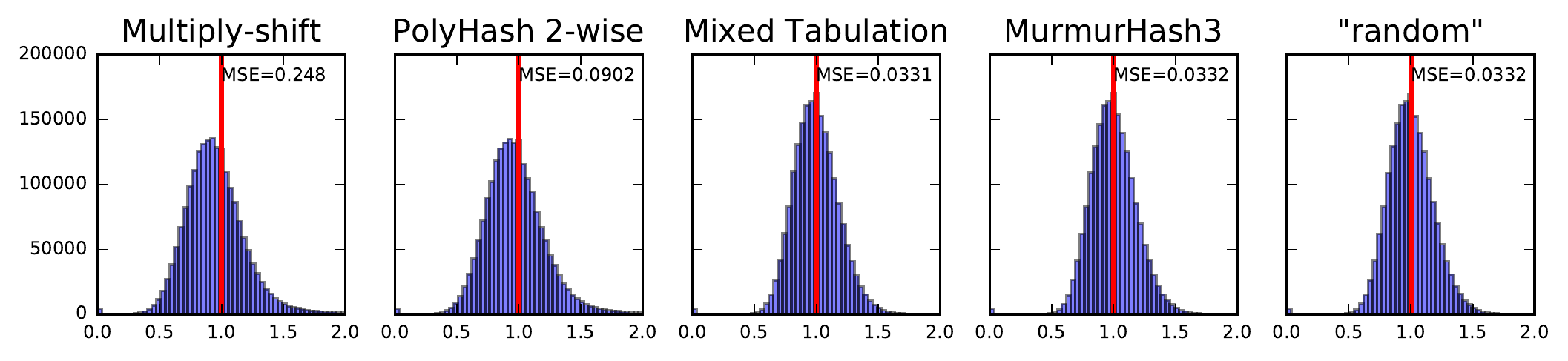}
    
    \caption{Norm of vector from FH with $d'=64$ for $100$ independent
    repetitions on MNIST (top) and News20 (bottom).}
\end{figure}

\begin{figure}[htbp]
    \centering
    \includegraphics[width = \textwidth]{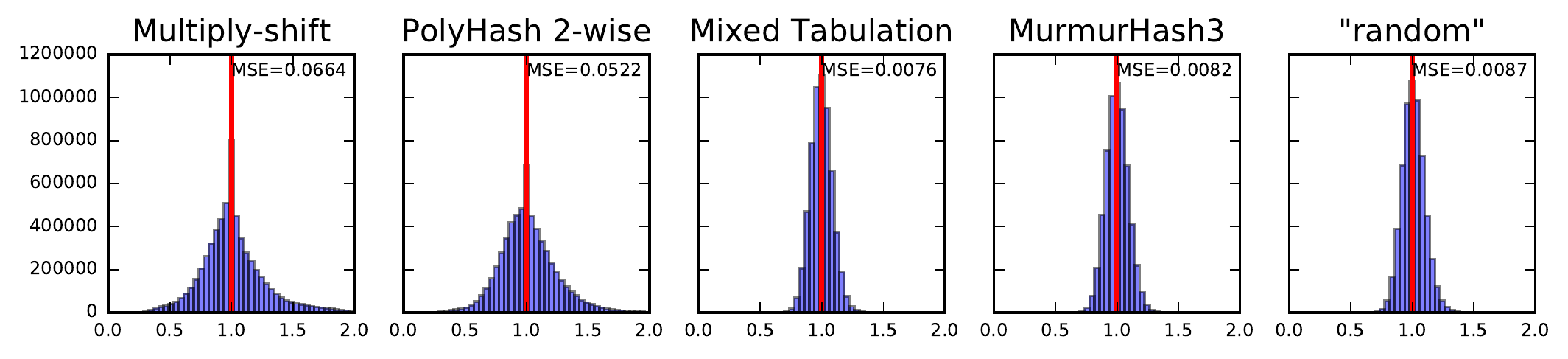}
    \includegraphics[width = \textwidth]{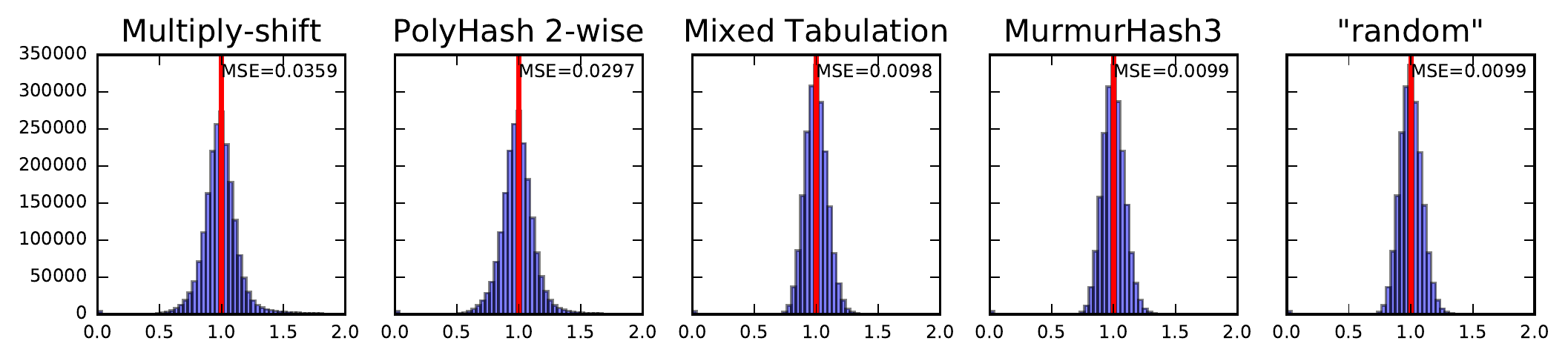}
    
    \caption{Norm of vector from FH with $d'=256$ for $100$ independent
    repetitions on MNIST (top) and News20 (bottom).}
\end{figure}

\end{document}